
\documentclass{article}

\usepackage[margin = 40pt,font=small,labelfont=bf]{caption}
\usepackage[left=2.5cm,right=2.5cm,top=2.5cm,bottom=2.5cm]{geometry}
\usepackage[T1]{fontenc}
\usepackage{microtype}
\usepackage{graphicx}
\usepackage{booktabs} 
\usepackage{amsfonts}
\usepackage{amsmath}
\usepackage{amssymb}
\usepackage{mathtools}
\usepackage{amsthm}
\usepackage{stmaryrd}
\usepackage{bbm}
\usepackage{xcolor}
\usepackage{soul}
\usepackage{caption}
\usepackage{authblk}
\usepackage{cite}
\usepackage{subcaption}
\usepackage{multirow}
\usepackage{booktabs}
\usepackage{hyperref}

\setlength{\parindent}{0pt}
\setlength{\parskip}{0.45\baselineskip}

\linespread{1.03}

\numberwithin{equation}{section}

\theoremstyle{plain}
\newtheorem{theorem}{Theorem}[section]
\newtheorem{proposition}[theorem]{Proposition}
\newtheorem{lemma}[theorem]{Lemma}

\theoremstyle{definition}

\theoremstyle{remark}
\newtheorem{remark}[theorem]{Remark}






\newcommand{\thefont}[2]{\fontsize{#1}{#2}\fontshape{n}\selectfont}
\newcommand{\1}{\rlap{\thefont{10pt}{12pt}1}\kern.16em\rlap{\thefont{11pt}{13.2pt}1}\kern.4em}

\DeclareMathOperator*{\argmin}{argmin}

\title{Scalable Learning from Probability Measures with Mean Measure Quantization}

\author[1]{Erell Gachon}
\author[2]{Elsa Cazelles}
\author[1]{J\'er\'emie Bigot}
\affil[1]{Institut de Math\'ematiques de Bordeaux, Universit\'e de Bordeaux, CNRS (UMR 5251)}
\affil[2]{CNRS, IRIT (UMR 5505), Universit\'e de Toulouse}

\begin{document}

\maketitle

\begin{abstract}
    We consider statistical learning problems in which data are observed as a set of probability measures. Optimal transport (OT) is a popular tool to compare and manipulate such objects, but its computational cost becomes prohibitive when the measures have large support. We study a quantization-based approach in which all input measures are approximated by $K$-point discrete measures sharing a common support. We establish consistency of the resulting quantized measures. We further derive convergence guarantees for several OT-based downstream tasks computed from the quantized measures. Numerical experiments on synthetic and real datasets demonstrate that the proposed approach achieves performance comparable to individual quantization while substantially reducing runtime.
\end{abstract}

\section{Introduction}

Optimal transport (OT) \cite{villani2009optimal, peyre2019computational} provides a powerful framework to address modern machine learning problems involving collections of probability measures, this includes applications in signal and image processing, computer vision, and computational biology \cite{7974883,10740308,muandet2017kernel,khamis2024scalable}. As OT takes into account the geometry of the underlying space of the data, it has proven effective for computing averages of distributions through Wasserstein barycenters \cite{agueh2011barycenters, cuturi2014fast}, performing meaningful Principal Component Analysis (PCA) of probability measures \cite{wang2013linear,seguy2015principal,bigot2017geodesic,cazelles2018geodesic}, and several supervised learning tasks such as regression and classification . In particular, these problems are often addressed using linearized OT (LOT) \cite{wang2013linear,delalande2023quantitative,moosmuller2023linear},  which allows to define an embedding of probability measures into a Hilbert space.

However, the computational cost of (L)OT-based methods makes them impractical when dealing with probability measures with large support. This is often the case when observing $N$ measures supported on point clouds with a large number $m_i$ of observations.
Such datasets are frequently found in flow cytometry \cite{mckinnon2018flow}, where observations collected from $N$ patients represent point clouds of thousands to millions of cells (that is $m_i\geq 10^5$), each characterized by $d$ bio-markers, with $d$ larger than $10$.
Using OT-based learning methods on such raw data becomes computationally prohibitive as soon as the number $m_i$ of points per clouds exceeds a few thousands.

We then consider the problem of learning from $N$ large-support probability measures $\mu^{(1)},\cdots,\mu^{(N)}$ on $\mathbb{R}^d$ using OT-based methods. A first natural solution to reduce computational costs is to approximate each distribution via random subsampling. Yet, this approach may fail to capture important geometric or low-density structures of the distributions. Quantization of probability measures \cite{graf2000foundations, pages2015introduction} provides an alternative, by approximating a measure by a discrete one with small support while controlling the Wasserstein error. One could therefore consider quantizing each input measure $\mu^{(i)}$ individually with $K$ points by solving the following $N$ quantization problems:

\begin{equation}\label{eq:individual_quantization}
\min\limits_{a^{(i)}\in\Sigma_K, \ X^{(i)}\in(\mathbb{R}^d)^K} W_2^2\Bigl(\sum\limits_{k=1}^K a^{(i)}_k \delta_{x^{(i)}_k}, \mu^{(i)}\Bigr) \qquad \forall 1\leq i\leq N,
\end{equation}
where $X^{(i)}$ is the vector of points $x^{(i)}_1,\ldots,x^{(i)}_K\in\mathbb{R}^d$, and $a^{(i)}$ is a probability vector of size $K$.

Problem \eqref{eq:individual_quantization} serves as a natural baseline; still, when the number $N$ of input measures is large, storing $N$ separate supports in $\mathbb{R}^d$ may be expensive.  Furthermore, because each measure is quantized on its own support $\{x^{(i)}_k\}_{k=1}^K$, the resulting discrete approximations are not directly comparable across $i$. For instance, when the measures exhibit several subpopulations, the disappearance of one subpopulation within a subset of measures may be difficult to detect under individual quantization, since the centroids are not aligned across measures. In this paper, we therefore study an alternative quantization-based approach, in which all measures are approximated by discrete measures sharing a common support:

\begin{equation}\label{eq:shared_support_quantization}
\min\limits_{X\in(\mathbb{R}^d)^K} \frac{1}{N}\sum\limits_{i=1}^N\min\limits_{a^{(i)}\in\Sigma_K} W_2^2\Bigl(\sum\limits_{k=1}^K a^{(i)}_k \delta_{x_k}, \mu^{(i)}\Bigr).
\end{equation}

Compared to \eqref{eq:individual_quantization}, problem \eqref{eq:shared_support_quantization} yields a single support shared across all measures, while allowing measure-specific weights. This formulation reduces storage costs and provides a representation with a clear intepretation: all measures are expressed over a common set of atoms, so their weight vectors can be compared directly. In particular, when the measures contain multiple subpopulations, the disappearance of one subpopulation will translate to zero mass assigned to the corresponding centroid, see Figure \ref{fig:rare_populations_quantization}. Formulation \eqref{eq:shared_support_quantization} mirrors common practice for data analysis involving multiple distributions: one fits a clustering model on the pooled data, and each distribution is then characterized by recovering proportional contributions for each cluster \cite{chazal21, naim2014swift}

\subsection{Contributions}

We study the shared-support quantization defined by \eqref{eq:shared_support_quantization} and analyze its theoretical and practical implications for OT-based learning with collections of probability measures. We summarize our contributions as follows:

\begin{enumerate}
    \item We highlight that the shared-support quantization problem \eqref{eq:shared_support_quantization} is equivalent to an optimal quantization of the mean measure in Proposition \ref{prop:pb_equivalence}. Building on this equivalence, we establish consistency results for the resulting quantized measures. In particular, we prove convergence in the 2-Wasserstein distance between probability measures supported on the space of probability measures in Theorem \ref{th:conv}.
    \item As a consequence, we derive convergence guarantees  for several downstream  OT-based learning tasks computed from the quantized measures in problem \eqref{eq:shared_support_quantization}, including  Wasserstein barycenters, statistical dispersion, and covariance operators associated with linearized OT embeddings in Section \ref{sec:downstream_tasks}. 
    \item We illustrate the soundness and consistency of this quantization method through numerical experiments on synthetic and real datasets. In particular, we show that the mean measure used to construct the quantized measures can be subsampled by a factor $N$ without degrading performance. As a result, our shared-support quantization method \eqref{eq:shared_support_quantization} is computationally advantageous over individual quantization \eqref{eq:individual_quantization} in large-scale settings.
\end{enumerate}

\subsection{Related works}

While quantization allows to approximate probability measures with a small set of points, other methods also aim at summarizing a dataset with representative samples. Within the framework of coresets \cite{huggins2016coresets,claici2018wasserstein}, one selects a subset of points such that solving a particular problem on this subset yields similar results than solving the problem on the entire dataset. In order to reduce the computational complexity of Gaussian Processes (GP) model, the principle of inducing variables, see e.g. \cite{Titsias_2009}, also allows for an approximation of the posterior by choosing a set of representative points and conditioning the GP on these points.

 In \cite{chazal21, royer2021atol},  quantization is employed to embed a set of $N$ probability measures into a finite-dimensional Euclidean space through measure vectorization. More precisely, given $N$ input measures $\mu^{(i)}$, a quantization of the mean measure $\bar{\mu} = \frac{1}{N}\sum_{i=1}^n \mu^{(i)}$ by $K$ centers $x_1,\ldots,x_K$  in  $\mathbb{R}^d$ is first done. Then, each measure $\mu^{(i)}$ is mapped to $v^{(i)}= (v^{(i)}_1,\cdots, v^{(i)}_K)$ a vector of the convex space $\mathbb{R}_{+}^K$, where $v^{(i)}_k$ roughly represents the mass  of the measure $\mu^{(i)}$ distributed around the center $x_k$. Yet, this embedding does not take into account the relative positions of the $K$ centers, and consistency of the approach is not studied. 
 Finally, the benefits of a preliminary quantization step have been studied in \cite{ beugnot2021improving} to improve the standard plug-in estimator of the OT cost between two probability measures. 

\section{Background}

\subsection{Optimal transport}

Let $\rho$ and $\mu$ be two probability measures with support included in a compact set $\mathcal{X} \subset\mathbb{R}^d$. For the quadratic cost, the OT problem between $\rho$ and $\mu$ is:
\begin{equation}\label{W2}
W_2^2(\rho,\mu) := \min\limits_{\pi \in \Pi(\rho,\mu)} \int_{\mathcal{X}\times\mathcal{X}} \|x - y\|^2\mathrm{d}\pi(x,y),
\end{equation}
where $\Pi(\rho,\mu)$ is the set of probability measures (or transport plans) on $\mathcal{X}\times\mathcal{X}$ with marginals $\rho$ and $\mu$. We denote by $\pi^*$ an optimal plan in \eqref{W2}, and set $W_2(\rho,\mu):=\sqrt{W_2^2(\rho,\mu)}$.

Now, we endow the set of probability measures $\mathcal{P}(\mathcal{X})$ with the 2-Wasserstein distance $W_2$, which results in a curved space of  probability measures. In this paper, we shall represent the set $(\mu^{(i)})_{1\leq i\leq N}$ as the discrete (empirical) probability measure $\mathbb{P}^N=\frac{1}{N}\sum_{i=1}^N \delta_{\mu^{(i)}}$ over $\mathcal{P}(\mathcal{X})$.  To define a metric  on $\mathcal{P}(\mathcal{P}(\mathcal{X}))$, the set of Borel probability measures over $\mathcal{P}(\mathcal{X})$, we will use $W_2^2$ as the ground cost on the metric space $(\mathcal{P}(\mathcal{X}),W_2)$. The 2-Wasserstein distance over $\mathcal{P}(\mathcal{P}(\mathcal{X}))$ \cite{le2017existence} is then defined as 
\begin{equation}\label{other_W2}
\mathcal{W}_2(\mathbb{P},\mathbb{Q}) = \Bigl(\min\limits_{\gamma \in \Gamma(\mathbb{P},\mathbb{Q})} \int_{\mathcal{P}(\mathcal{X})\times \mathcal{P}(\mathcal{X})} W_2^2(\rho,\mu)\mathrm{d}\gamma(\rho,\mu)\Bigr)^{1/2},
\end{equation}
where $\Gamma(\mathbb{P},\mathbb{Q})$ is the set of probability distributions on $\mathcal{P}(\mathcal{X})\times \mathcal{P}(\mathcal{X})$ with respective marginals $\mathbb{P}$ and $\mathbb{Q}$. 

\subsection{Optimal quantization of probability measures}

We briefly recall the notion of optimal quantization; see
\cite{graf2000foundations, pages2015introduction}.
Given a probability measure $\mu$ on $\mathbb{R}^d$ and an integer $K \geq 1$,
the $K$-point quantization problem consists in approximating $\mu$ by a discrete
probability measure supported on $K$ points, and is defined as
\begin{equation}\label{quantif_pb}
    \min_{a \in \Sigma_K,\; X \in (\mathbb{R}^d)^K}
    W_2^2\!\left(\mu, \sum_{k=1}^K a_k \delta_{x_k}\right),
\end{equation}
where $\Sigma_K$ denotes the probability simplex in $\mathbb{R}^K$ and
$X=(x_1,\ldots,x_K)$ with $x_k \in \mathbb{R}^d$.
Any minimizer $(a^*,X^*)$ defines a $K$-point quantization of $\mu$, which is the measure
$\sum_{k=1}^K a_k^* \delta_{x_k^*}$.

\paragraph{Reformulation via Vorono\"i cells.} For $K\geq 1$, define the set

\begin{equation}\label{def:FK}
    F_K:= \bigl\{ X=(x_1,\ldots,x_K) \in (\mathbb{R}^d)^K \;|\;
    x_k \neq x_\ell \text{ for } k \neq \ell \bigr\}
\end{equation}

of vectors with pairwise distinct points. For any $X= (x_1,\cdots,x_K)\in F_K$, we have that 

\begin{equation}\label{eq:quant_voronoi}
\min_{a \in \Sigma_K}
W_2^2\left(\mu, \sum_{k=1}^K a_k \delta_{x_k}\right) = \int_{\mathbb R^d} \min_{1 \le k \le K} \|y-x_k\|^2 \mathrm{d}\mu(y),
\end{equation}

see Lemma \ref{lem:discrete}. Moreover, if $\mu$ is absolutely continuous, a minimizer is given by $a^*_k = \mu(V_{x_k})$, where $(V_{x_k})_{k=1}^K$ are Vorono\"i cells defined as
\begin{equation}\label{voronoi_cell}
    V_{x_k}
    :=
    \bigl\{ y \in \mathbb{R}^d \;:\;
    \|y-x_k\|^2 \leq \|y-x_\ell\|^2 \ \forall \ell \neq k \bigr\} \qquad \mbox{for } 1\leq k\leq K.
\end{equation} 
These Vorono\"i cells form a partition of $\mathbb{R}^d$ up to a $\mu$-null set.
When $\mu$ assigns positive mass to Vorono\"i boundaries (which may occur in particular for discrete measures), a measurable tie-breaking construction is required; see Appendix~\ref{sec:voronoi}. Throughout the paper, we keep the notation $V_{x_k}$ for Vorono\"i cells, understanding it as the measurable partition produced by this tie-breaking when needed. As a consequence, the quantization problem \eqref{quantif_pb} rewrites as

\begin{equation}\label{quantif_pb2}
\min_{X \in F_K} \int_{\mathbb R^d} \min_{1 \le k \le K} \|y-x_k\|^2 \mathrm{d}\mu(y).
\end{equation}

{\bf Quantization error.} Given a minimizer $X^* \in F_K$ of \eqref{quantif_pb2},
the associated \emph{quantization error} is defined by
\begin{equation}\label{def:quant_error}
    \varepsilon_K(\mu)
    :=
    \int_{\mathbb{R}^d}
    \min_{1 \leq k \leq K} \|y-x_k^*\|^2 \, \mathrm{d}\mu(y).
\end{equation}
According to \cite[Theorem~6.2]{graf2000foundations},
$\varepsilon_K(\mu)=\mathcal{O}(K^{-2/d})$ when $\mu$ is absolutely continuous,
and $\varepsilon_K(\mu)=o(K^{-2/d})$ when $\mu$ is discrete.

\section{Main results}\label{sec:main_results}
Let $\mu^{(1)}, \cdots, \mu^{(N)}$ be $N$ probability measures in $\mathcal{P}(\mathcal{X})$ where $\mathcal{X}\subset\mathbb{R}^d$ is a compact set. To reduce computational costs when dealing with measures with large supports, we propose to approximate the input measures using a shared set of $K$ atoms and measure-specific weights. The method consists in solving the following problem:
\begin{equation}\label{eq:alternative_barycenter}
 \min\limits_{a\in(\Sigma_K)^N} \min\limits_{X\in F_K} \frac{1}{N} \sum\limits_{i=1}^N\   W_2^2\Bigl(\sum\limits_{k=1}^K a^{(i)}_k\delta_{x_k}, \mu^{(i)}\Bigr),
\end{equation}
as introduced in \cite{gachon2025low}. The following result shows that problem \eqref{eq:alternative_barycenter} is surprisingly equivalent to an optimal quantization problem of the mean measure $\overline{\mu} = \frac{1}{N}\sum_{i=1}^N \mu^{(i)}$.

\begin{proposition}\label{prop:pb_equivalence}
Let $(\mu^{(i)})_{1\leq i\leq N}$ be arbitrary probability measures with support included in a compact set $\mathcal{X}\subset\mathbb{R}^d$ and let $\overline{\mu} = \frac{1}{N}\sum_{i=1}^N \mu^{(i)}$ be the mean measure. Suppose that the cardinality of the support of $\overline{\mu}$ is larger than $K$. Then,
\begin{eqnarray}
    \min\limits_{a\in(\Sigma_K)^N, \ X\in F_K} \frac{1}{N} \sum\limits_{i=1}^N W_2^2\Bigl(\sum\limits_{k=1}^K a^{(i)}_k\delta_{x_k}, \mu^{(i)}\Bigr)
    = \min\limits_{X\in (\mathbb{R}^d)^K}&   W_2^2\Bigl(\sum\limits_{k=1}^K \overline{\mu}(  V_{x_k})\delta_{x_k},\overline{\mu}\Bigr) \label{eq:pb_equivalence}
\end{eqnarray}

\end{proposition}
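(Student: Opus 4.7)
The plan is to reduce the joint minimization over weights $a$ and locations $X$ to a single quantization problem in $X$ only by using the known closed form of optimal weights with respect to Vorono\"{\i} cells, and then to swap sums and integrals using linearity of the integral.

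For the first equality, the idea is to fix $X=(x_1,\ldots,x_K)\in F_K$ and to observe that the inner minimization in $a\in(\Sigma_K)^N$ decouples over $i$ because the objective is a sum $\frac{1}{N}\sum_{i=1}^N W_2^2(\sum_k a_k^{(i)}\delta_{x_k},\mu^{(i)})$ in which each term only depends on $a^{(i)}$. For each fixed $i$, I apply Lemma~\ref{lem:discrete} (already announced in the paragraph following \eqref{voronoi_cell_discrete}) to the possibly discrete measure $\mu^{(i)}$, which gives
\begin{equation*}
\min_{a^{(i)}\in\Sigma_K} W_2^2\Bigl(\mu^{(i)},\sum_{k=1}^K a^{(i)}_k \delta_{x_k}\Bigr) = W_2^2\Bigl(\mu^{(i)},\sum_{k=1}^K \mu^{(i)}(\widetilde{V}_{x_k})\delta_{x_k}\Bigr) = \int_{\mathbb{R}^d}\min_{1\leq k\leq K}\|x_k - y\|^2 \, \mathrm{d}\mu^{(i)}(y),
\end{equation*}
with the Vorono\"{\i} partition $(\widetilde{V}_{x_k})_{k=1}^K$ of \eqref{voronoi_cell_discrete}. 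Summing over $i$ and dividing by $N$ yields the first claimed equality, once I verify that extending the outer minimization from $F_K$ to $(\mathbb{R}^d)^K$ does not change the infimum: this uses the hypothesis that the support of $\overline{\mu}$ has cardinality larger than $K$, together with \cite{JMLR:v21:18-804}[Prop.~2] applied to $\overline{\mu}$, to guarantee that any minimizer in $X$ of the right-hand side (which, by the computation below, coincides with a quantizer of $\overline{\mu}$) lies in $F_K$.

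For the second equality, I use linearity of integration to write
\begin{equation*}
\frac{1}{N}\sum_{i=1}^N \int_{\mathbb{R}^d}\min_{1\leq k\leq K}\|x_k-y\|^2\,\mathrm{d}\mu^{(i)}(y) = \int_{\mathbb{R}^d}\min_{1\leq k\leq K}\|x_k-y\|^2\,\mathrm{d}\overline{\mu}(y),
\end{equation*}
and then apply Lemma~\ref{lem:discrete} once more, this time to $\overline{\mu}$, to re-express the right-hand side as $W_2^2(\overline{\mu},\sum_{k=1}^K \overline{\mu}(\widetilde{V}_{x_k})\delta_{x_k})$. Taking the infimum in $X$ on both sides delivers the second equality in \eqref{eq:pb_equivalence}.

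The only delicate point I anticipate is bookkeeping around the Vorono\"{\i} partition \eqref{voronoi_cell_discrete}, which depends on an enumeration of $X$ and is used simultaneously for all measures $\mu^{(i)}$ and for $\overline{\mu}$; the same partition serves all of them because it only depends on the locations $X$. Beyond this, the proof is just the two-line calculation above combined with the existence of optimizers in $F_K$ provided by the support assumption on $\overline{\mu}$.
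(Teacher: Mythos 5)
Your proof is correct and follows essentially the same route as the paper: apply Lemma~\ref{lem:discrete} to each $\mu^{(i)}$ to eliminate the weights, swap the sum over $i$ with the integration to pass to $\overline{\mu}$, apply Lemma~\ref{lem:discrete} once more, and finally use the cardinality assumption via \cite{JMLR:v21:18-804}[Prop.~2] to replace $F_K$ by $(\mathbb{R}^d)^K$. The bookkeeping point you flag about a single Vorono\"{\i} partition serving all measures is handled the same way in the paper, so there is no gap.
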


Proposition \ref{prop:pb_equivalence} is proven in Appendix \ref{sec:proof_pb_equivalence}. The assumption that the cardinality of the support of $\overline{\mu}$ is larger than $K$ ensures that an optimal $K$-point quantizer of $\overline{\mu}$ has $K$ distinct atoms, so the minimization over $(\mathbb{R}^d)^K$ in \eqref{eq:pb_equivalence} is equivalently a minimization over $F_K$. For a minimizer $\bar{X} = (\bar{x}_1,\cdots,\bar{x}_K)$ of \eqref{eq:pb_equivalence}, that is a $K$-point quantization of $\overline{\mu}$, we then define the quantized measures for $1\leq i\leq N$ by
\begin{equation}\label{eq:nubar}
    \mu^{(i)}_K = \sum\limits_{k=1}^K \bar{a}^{(i)}_k \delta_{\bar{x}_k}, \mbox{ with }  \bar{a}^{(i)}_k = \mu^{(i)}(V_{\bar{x}_k}), 
\end{equation}

The measure $\mu^{(i)}_K$ is therefore a discrete probability measure supported on $K$ points  which is an approximation of $\mu^{(i)}$ in the sense of the minimization problem in \eqref{eq:alternative_barycenter}. The measures $(\mu^{(i)}_K)_{1 \leq i \leq N}$ differ in their weights but share the same support $\overline{X}$. In a slight abuse of language, we will refer to $\mu^{(i)}_K$ as a quantized version of $\mu^{(i)}$, even though it is not a solution of the optimal quantization problem \eqref{quantif_pb}. We refer to the approximation of the $\mu^{(i)}$'s by the $\mu^{(i)}_K$'s  as \emph{mean measure quantization}.

\begin{remark}[On the compactness assumption of $\mathcal{X}$]  Proposition 3.1 remains true without the compactness assumption on $\mathcal{X}$, under finite $2$-order moments of the measures.
\end{remark}

\paragraph{Practical construction.} In practice,  each measure $\mu^{(i)}$ is observed through samples $X^{(i)} = (X^{(i)}_1,\cdots, X^{(i)}_{m_i})\in(\mathbb{R}^d)^{m_i}$, $1\leq i\leq N$, where $X^{(i)}_j \overset{iid}{\sim} \mu^{(i)}$. The mean measure $\overline{\mu}$ is then approximated by the empirical mean measure $(1/N)\sum_{i=1}^N \hat{\mu}^{(i)}$ where $\hat{\mu}^{(i)} = (1/m_i)\sum_{j=1}^{m_i} \delta_{X^{(i)}_j}$. One can think of the empirical mean measure as a concatenation of all samples $(X^{(i)})_{1\leq i\leq N}$. We then compute a $K$-point quantization of the empirical mean measure using Lloyd's algorithm \cite{lloyd1982least} with an initialization based on $K$-means$++$ \cite{kmeansplusplus}. The time complexity of the Lloyd's algorithm is $O(Kd\sum_{i=1}^N m_i)$. This quantization yields centers $\bar{X}=(\bar{x}_1,\ldots,\bar{x}_K)$. Finally, for each $1\leq i\leq N$, the weights of the quantized measure $\mu^{(i)}_K$ are computed by counting the number of points $X^{(i)}_j$ that fall into each Vorono\"i cell $V_{\bar{x}_k}$ for $1\leq k \leq K$.  

\paragraph{Consistency of the mean measure quantization} The following result shows the consistency of the mean-measure quantization by leveraging the quantization error defined in \eqref{def:quant_error}.

\begin{theorem}\label{th:conv}
Let $\mathbb{P}^N = \frac{1}{N}\sum_{i=1}^N\delta_{\mu^{(i)}}$,and  $\mathbb{P}^N_K = \frac{1}{N}\sum_{i=1}^N\delta_{{\mu}^{(i)}_K}$ where $(\mu^{(i)}_K)_{1\leq i\leq N}$ is  given in \eqref{eq:nubar}. Then,
$$
\mathcal{W}_2^2(\mathbb{P}^N_K,\mathbb{P}^N) = 
\frac{1}{N}\sum_{i=1}^N W_2^2(\mu^{(i)},{\mu}^{(i)}_K) = \varepsilon_K( \overline{\mu}).
$$
\end{theorem}


Theorem \ref{th:conv} is proven in Appendix \ref{sec:proof_main_th}. Since $\mathcal{X}$ is a compact set, so is the metric space $(\mathcal{P}(\mathcal{X}),W_2)$ \cite{villani2009optimal}[Remark 6.17]. Then, by  \cite{santambrogio2015optimal}[Theorem 5.9], $\mathcal{W}_2(\mathbb{P}^N_K,\mathbb{P})\rightarrow 0$ if and only if $\mathbb{P}^N_K  \rightarrow \mathbb{P}^N$ in the sense of weak convergence of distributions. In other words,  for any bounded continuous function $f:\mathcal{P}(\mathcal{X})\rightarrow \mathbb{R}$, it holds that $\int f(\nu)\mathrm{d} \mathbb{P}^N_K(\nu)\overset{K \to +\infty}{\longrightarrow} \int f(\mu)\mathrm{d}\mathbb{P}^N(\mu)$. Therefore, one can deduce from Theorem \ref{th:conv} the consistency of numerous statistics computed from the quantized measures $(\mu^{(i)}_K)_{1\leq i \leq N}$.

\begin{remark}
 We prove an analogous version of Theorem \ref{th:conv} in Proposition \ref{prop:individual_quantization_conv}, when the $\mu^{(i)}_K$'s are constructed with individual quantization \eqref{eq:individual_quantization}.
\end{remark}

\paragraph{Stability of the quantized measures.} Guaranteeing that the pairwise distance $W_2(\mu^{(i)}_K, \mu^{(j)}_K)$ is a good approximation of $W_2(\mu^{(i)}, \mu^{(j)})$ is crucial for OT-based methods. Proposition \ref{prop:pairwise_dist}, proven in Appendix \ref{sec:proof_pairwise_dist}, provides such a guarantee when the input measures are all absolutely continuous.

\begin{proposition}[Pairwise distances]\label{prop:pairwise_dist}
    Suppose that the probability measures $(\mu^{(i)})_{i=1}^N$ are absolutely continuous and that the support of the mean measure in included in $[0,1]^d$. Then, for any $1\leq i,j \leq N$,

    $$
    W_2^2(\mu^{(i)}_K, \mu^{(j)}_K) \leq 3 W_2^2(\mu^{(i)}, \mu^{(j)}) + \frac{6d}{\lfloor \sqrt[d]{K} \rfloor^2}.
    $$
\end{proposition}

\section{Downstream tasks with quantized measures}\label{sec:downstream_tasks}

We focus here on how downstream tasks computed from the quantized measures $\mu^{(i)}_K$ relate to the same tasks computed from the original measures $\mu^{(i)}$. 

\subsection{Wasserstein barycenter}

A first example consists in proving that a Wasserstein barycenter  \cite{agueh2011barycenters} of the $(\mu^{(i)}_K)_{1\leq i\leq N}$ converges towards the unique Wasserstein barycenter of the measures $(\mu^{(i)})_{1\leq i\leq N}$ when at least one of them is absolutely continous (a.c).

\begin{proposition}[Wasserstein barycenter]\label{bary_conv}
Let $\mu_K^{\mathrm{bar}}$ be a Wasserstein barycenter of $(\mu^{(i)}_K)_{1\leq i\leq N}$ that is 
$$\mu_K^{\mathrm{bar}} \in \argmin\limits_{\mu\in\mathcal{P}(\mathcal{X})}\ \frac{1}{N}\sum\limits_{i=1}^N W_2^2(\mu,\mu^{(i)}_K).$$ If  at least one of the measures $(\mu^{(i)})_{1\leq i\leq N}$ is  a.c., then $\mu_K^{\mathrm{bar}}$ converges to the unique Wasserstein barycenter $\mu^{\mathrm{bar}}$ of $(\mu^{(i)})_{1\leq i\leq N}$ in the Wasserstein sense as $K \to + \infty$ . 
\end{proposition}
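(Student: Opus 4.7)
The plan is to combine a uniform-convergence argument for the barycenter functionals with the uniqueness of $\mu^{\mathrm{bar}}$ guaranteed under the absolute-continuity assumption. Define the functionals
\begin{equation*}
F_K(\nu) = \frac{1}{N}\sum_{i=1}^N W_2^2(\nu,\nu^{(i)}_K), \qquad F(\nu) = \frac{1}{N}\sum_{i=1}^N W_2^2(\nu,\mu^{(i)}),
\end{equation*}
so that $\nu_K^{\mathrm{bar}}$ minimizes $F_K$ while, by the Agueh--Carlier theorem \cite{agueh2011barycenters}, $\mu^{\mathrm{bar}}$ is the \emph{unique} minimizer of $F$. Since $\mathcal{X}$ is compact, so is the metric space $(\mathcal{P}(\mathcal{X}),W_2)$, hence the family $(\nu_K^{\mathrm{bar}})_{K}$ admits accumulation points in this space.

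The key step is to show that $F_K \to F$ uniformly on $\mathcal{P}(\mathcal{X})$. For any $\nu \in \mathcal{P}(\mathcal{X})$, factoring $W_2^2(\nu,\nu^{(i)}_K) - W_2^2(\nu,\mu^{(i)}) = \bigl(W_2(\nu,\nu^{(i)}_K) - W_2(\nu,\mu^{(i)})\bigr)\bigl(W_2(\nu,\nu^{(i)}_K) + W_2(\nu,\mu^{(i)})\bigr)$, using the triangle inequality for $W_2$ together with the uniform bound $W_2(\rho,\sigma) \leq \mathrm{diam}(\mathcal{X})$ valid on $\mathcal{P}(\mathcal{X})$, and applying Cauchy--Schwarz in combination with Proposition~\ref{prop:conv}, one obtains
\begin{equation*}
|F_K(\nu) - F(\nu)| \;\leq\; \frac{2\,\mathrm{diam}(\mathcal{X})}{N}\sum_{i=1}^N W_2(\mu^{(i)},\nu^{(i)}_K) \;\leq\; 2\,\mathrm{diam}(\mathcal{X})\sqrt{\varepsilon_K}.
\end{equation*}
Hence $\sup_{\nu \in \mathcal{P}(\mathcal{X})}|F_K(\nu)-F(\nu)| \to 0$ as $K \to +\infty$, and $F$ is itself continuous on $(\mathcal{P}(\mathcal{X}),W_2)$ (since each $\nu \mapsto W_2^2(\nu,\mu^{(i)})$ is continuous).

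To conclude, consider any subsequence $\nu_{K_j}^{\mathrm{bar}}$ converging in $W_2$ to some $\nu^\star \in \mathcal{P}(\mathcal{X})$. The minimization property yields $F_{K_j}(\nu_{K_j}^{\mathrm{bar}}) \leq F_{K_j}(\mu^{\mathrm{bar}})$. Passing to the limit $j \to +\infty$, combining the uniform convergence $F_{K_j} \to F$ with the continuity of $F$ at $\nu^\star$, one gets $F(\nu^\star) \leq F(\mu^{\mathrm{bar}})$, so $\nu^\star$ is a minimizer of $F$. By uniqueness, $\nu^\star = \mu^{\mathrm{bar}}$. Since every accumulation point of $(\nu_K^{\mathrm{bar}})$ equals $\mu^{\mathrm{bar}}$, the whole sequence converges to $\mu^{\mathrm{bar}}$ in the Wasserstein sense.

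The main obstacle is obtaining a genuinely \emph{uniform} control on $|F_K(\nu) - F(\nu)|$: this is where the compactness of $\mathcal{X}$ (providing the diameter bound on $W_2$) and the averaged quantization error $\varepsilon_K$ from Proposition~\ref{prop:conv} combine to give a bound independent of the test measure $\nu$. Everything else follows the classical ``consistency of minimizers under uniform convergence'' pattern, with the absolute-continuity hypothesis intervening only through Agueh--Carlier's uniqueness result.
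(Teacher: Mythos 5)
Your proof is correct, but it takes a genuinely different route from the paper's. The paper lifts the problem to $\mathcal{P}(\mathcal{P}(\mathcal{X}))$: it observes via Proposition~\ref{prop:conv} that $\mathcal{W}_2(\mathbb{P}^N_K,\mathbb{P}^N)\to 0$, notes that $\mathbb{P}^N$ has a unique barycenter by Agueh--Carlier, and then invokes the stability theorem of Le Gouic and Loubes (\cite{le2017existence}, Theorem~3), which states that barycenters depend continuously on the underlying distribution in the $\mathcal{W}_2$ topology. You instead give a self-contained ``Gamma-convergence-style'' argument: you prove the uniform bound $\sup_{\nu}|F_K(\nu)-F(\nu)|\leq 2\,\mathrm{diam}(\mathcal{X})\sqrt{\varepsilon_K}$ directly (via the triangle inequality, the diameter bound, and Cauchy--Schwarz against Proposition~\ref{prop:conv}), then use compactness of $(\mathcal{P}(\mathcal{X}),W_2)$ to extract accumulation points of $(\nu_K^{\mathrm{bar}})$, show each must minimize $F$, and conclude by Agueh--Carlier uniqueness. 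Both routes are sound: the paper's is shorter and leans on a known stability result that applies in much greater generality (non-uniform, non-finitely-supported $\mathbb{P}$); yours is more elementary, avoids the abstract theorem, and as a bonus yields an explicit quantitative uniform rate $O(\sqrt{\varepsilon_K})$ on the objective gap, which the citation-based argument does not immediately provide.
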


Proposition \ref{bary_conv} is proven in Appendix \ref{sec:proof_bary_conv}.

\subsection{Statistical dispersion} 

We focus here on the convergence of the dispersion of the quantized measures towards the dispersion of the original measures.

\begin{proposition}[Statistical dispersion]\label{prop:dispersion}
 Let $\mathrm{SS}(\mu) = \frac{1}{N^2} \sum_{i,j=1}^N W_2^2(\mu^{(i)}, \mu^{(j)})$ and $\mathrm{SS}(\mu_K) = \frac{1}{N^2} \sum_{i,j=1}^N W_2^2(\mu^{(i)}_K, \mu^{(j)}_K)$ be the statistical dispersion of $(\mu^{(i)})_{1\leq i\leq N}$ and $(\mu^{(i)}_K)_{1\leq i\leq N}$ respectively. Then, for any $\lambda>0$,
$$
\mathrm{SS}(\mu_K) \leq \bigl(1+\frac{2}{\lambda}\bigr) \mathrm{SS}(\mu) + (4+2\lambda) \varepsilon_K(\overline{\mu}).
$$
\end{proposition}

Proposition \ref{prop:dispersion} shows that the dispersion of the quantized measures is controlled by the dispersion of the original measures and the quantization error. Proposition \ref{prop:dispersion} is proven in Appendix \ref{sec:proof_dispersion}.

\subsection{Clustering performances}

We now show that the mean-measure quantization method preserve the clustering structure of the input measures. To this end, let us assume that each  measure $\mu^{(i)}$ has a label $1\leq l\leq L$. We note $I_l$ the set of indices such that $\forall i\in I_l, \mu^{(i)}$ has label $l$, and $N_l$ its cardinal. When clustering data, one usually aims at minimizing the within-class variance WCSS for a cluster $l$ and maximizing the between-class variance BCSS for clusters $l_1$ and $l_2$, where for a set of measure $\mu=(\mu^{(i)})_{i=1}^N$,
\begin{equation*}
    \mathrm{WCSS}(l,\mu) =  \frac{1}{N_l^2} \sum\limits_{i,j \in I_l} W_2^2(\mu^{(i)},\mu^{(j)}), \qquad
    \mathrm{BCSS}(l_1,l_2, \mu)  =  \frac{1}{N_{l_1}N_{l_2}} \sum_{\substack{i_1\in I_{l_1} \\ i_2\in I_{l_2}}} W_2^2(\mu^{(i_1)},\mu^{(i_2)}). 
\end{equation*}

The next results, proven in Appendix \ref{sec:proof_clusering_perf}, gives a bound on the clustering performances of the quantized measures, as illustrated in Section \ref{sec:xp}.

\begin{proposition}\label{prop:clustering_perf}
For a given class $1 \leq l \leq L$, one has
\begin{equation}\label{intra_var_bound}
        \mathrm{WCSS}(l, \mu_K) \leq 3\mathrm{ WCSS}(l,\mu)+ \frac{6N}{N_l}\varepsilon_K. 
    \end{equation}
For two distinct classes $l_1$ and $l_2$, one has that
    $$
        \mathrm{BCSS}(l_1, l_2, \mu_K) \geq \frac{1}{3}\mathrm{BCSS}(l_1, l_2, \mu) - \Bigl( \frac{N}{N_{l_1}}+\frac{N}{N_{l_2}}\Bigr) \varepsilon_K.
   $$
\end{proposition}

\subsection{Application to Linearized OT embedding}

In standard Euclidean settings, most ML methods, such as Principal Component Analysis (PCA) or Linear Discriminant Analysis (LDA), rely on the diagonalization of the covariance operator of the data. However, when the data belong to the space of probability measures endowed with the 2-Wasserstein distance, such methods cannot be applied directly as this space is not a Hilbert space. A common approach is to embed the probability measures into a Hilbert space. In this section, we focus on the linearized OT (LOT) embedding. Given an absolutely continuous measure $\rho$, we first recall that   Brenier's theorem \cite{brenier1991polar} states that the OT plan $\pi^*$ is supported on the graph of a $\rho$-a.s. unique push-forward map\footnote{We recall that the pushforward of a measure $\rho$ in $\mathbb{R}^d$ by a measurable map $T$ is defined as the measure $T_{\#}\rho$ such that for all Borelian $B\subset\mathbb{R}^d, T_{\#}\rho=\rho\left(T^{-1}(B)\right)$.} $T_{\rho}^{\mu}: \mathcal{X} \rightarrow \mathbb{R}^d$. In other words, $\pi^* = (\mathrm{id},T_{\rho}^{\mu})_{\#}\rho$ and 
\begin{equation}\label{eq:Monge}
  W_2^2(\rho,\mu) = \int_{\mathcal{X}} \|x - T_{\rho}^{\mu}(x)\|^2\mathrm{d}\rho(x).  
\end{equation}

The LOT embedding \cite{wang2013linear,delalande2023quantitative,moosmuller2023linear}is then the following:
\begin{align*}
    \Phi^{\mathrm{LOT}} :\mathcal{P}(\mathcal{X}) &\rightarrow L^2(\rho)\\
    \mu &\rightarrow T_{\rho}^{\mu} - \mathrm{Id}. 
\end{align*}

where $L^2(\rho) = \{ v: \mathcal{X} \rightarrow \mathbb{R}^d ~|~ \int_{\mathcal{X}} \|v\|^2 \mathrm{d}{\rho} < \infty \}$ is endowed with the weighted $L^2$ inner product $\langle v_1, v_2 \rangle_{L^2({\rho})} = \int_{\mathcal{X}} v_1(x)^Tv_2(x) \mathrm{d}{\rho}(x)$. We will assume that $\Phi^{\mathrm{LOT}}$ is centered, that is $\frac{1}{N}\sum_{i=1}^N \Phi^{\mathrm{LOT}}(\mu^{(i)}) = 0$.


We denote $\Sigma^N$ and $\Sigma^N_K$ the covariance operators associated with the embeddings of the original measures and the quantized measures respectively, that is
\begin{equation}\label{eq:cov_operators}
\Sigma^N = \frac{1}{N}\sum\limits_{i=1}^N \Phi^{\mathrm{LOT}}(\mu^{(i)})\otimes \Phi^{\mathrm{LOT}}(\mu^{(i)}) \qquad \Sigma^N_K = \frac{1}{N}\sum\limits_{i=1}^N \Phi^{\mathrm{LOT}}(\mu^{(i)}_K)\otimes \Phi^{\mathrm{LOT}}(\mu^{(i)}_K).
\end{equation}

We denote $\|\cdot\|_{\mathrm{HS}}$ the Hilbert-Schmidt norm on the space of Hilbert-Schmidt operators on $\mathcal{H}$. As $\Sigma^N$ and $\Sigma^N_K$ have finite rank (at most $N$), they are Hilbert-Schmidt operators and their Hilbert-Schmidt norms are therefore well defined.

\begin{proposition}[Covariance operator]\label{prop:cov_op_lot}
Let $\Sigma^N$ and $\Sigma^N_K$ be the covariance operators associated with the LOT embeddings of the original measures and the quantized measures respectively. Assume the Monge maps $T_{\rho}^{\mu^{(i)}}$ are $L$-Lipschitz for all $1\leq i\leq N$. Then,
$$
\|\Sigma^N - \Sigma^N_K\|_{\mathrm{HS}} \leq 4\mathrm{diam}( \mathcal{X})^{3/2} L^{1/2} ~ \varepsilon_K^{1/4}(\overline{\mu}).
$$

\end{proposition}

Proposition \ref{prop:cov_op_lot} is proven in Appendix \ref{sec:proof_cov_op_lot}. As a consequence of Proposition \ref{prop:cov_op_lot}, we have that any ML task relying on the covariance operator $\Sigma^N$ is consistent (as $K\rightarrow +\infty$) with the same task computed from $\Sigma^N_K$.

\begin{remark}
    In Appendix \ref{sec:gram}, we derive an analogous result to Proposition \ref{prop:cov_op_lot} for the $N\times N$ Gram matrices $G^N$ and $G^N_K$ constructed such that for all $1 \leq i,j\leq N$, $$(G^N)_{ij} = \langle \Phi^{\mathrm{LOT}}(\mu^{(i)}), \Phi^{\mathrm{LOT}}(\mu^{(j)})\rangle_{L^2(\rho)}\quad\mbox{and}\quad(G^N_K)_{ij}= \langle \Phi^{\mathrm{LOT}}(\mu^{(i)}_K), \Phi^{\mathrm{LOT}}(\mu^{(j)}_K)\rangle_{L^2(\rho)}.$$
\end{remark}

\begin{remark}
    We derive a similar result to Proposition \ref{prop:cov_op_lot} for the Kernel Mean Embedding (KME) \cite{muandet2017kernel} in Proposition \ref{prop:cov_op_kme} in Appendix \ref{sec:cov_op_kme}. The KME is also a Hilbert space embedding of probability measures, which depends on the choice of a kernel function $k$. However, since our analysis revolves around optimal transport and the Wasserstein geometry, we chose to solely focus on the LOT embedding. 
\end{remark}

\section{Numerical experiments}\label{sec:xp}

\subsection{Synthetic datasets}

\subsubsection{Gaussian measures}

We illustrate the proposed mean-measure quantization scheme on a controlled setting in which several OT-based quantities can be computed in closed form.
We consider $N=20$ isotropic Gaussian measures on $\mathbb{R}^d$  ($d=2$) of the form $\mu^{(i)} = \mathcal{N}\bigl(b^{(i)}, (\sigma^{(i)})^2 I_d\bigr), 1\leq i\leq N,$ where $b^{(i)}\in\mathbb{R}^d$ and $\sigma^{(i)}>0$ are fixed. The Wasserstein barycenter of $(\mu^{(i)})_{i=1}^N$ admits a closed form \cite{agueh2011barycenters}:
\begin{equation}\label{eq:gaussian_barycenter}
\mu^{\mathrm{bar}} = \argmin\limits_{\mu\in\mathcal{P}(\mathbb{R}^d)} \frac{1}{N}\sum\limits_{i=1}^N W_2^2(\mu, \mu^{(i)}) = \mathcal{N}\biggl(\frac{1}{N} \sum\limits_{i=1}^N b^{(i)}, \Bigl(\frac{1}{N}\sum\limits_{i=1}^N \sigma^{(i)}\Bigr)^2 I_d \biggr)
\end{equation}

For any $1\leq i,j\leq N$, the squared $2$-Wasserstein distance also admits the explicit expression
\[
W_2^2\bigl(\mu^{(i)}, \mu^{(j)}\bigr) = \|b^{(i)} - b^{(j)}\|^2 + d\,\bigl(\sigma^{(i)} - \sigma^{(j)}\bigr)^2.
\]
In particular, this allows us to compute exactly the statistical dispersion $\mathrm{SS}(\mu)$ defined in Proposition~\ref{prop:dispersion}:
\begin{equation}\label{eq:gaussian_dispersion}
\mathrm{SS}(\mu) = \frac{1}{N^2} \sum\limits_{i,j=1}^N \|b^{(i)} - b^{(j)}\|^2 + d\,\bigl(\sigma^{(i)} - \sigma^{(j)}\bigr)^2. 
\end{equation}
Finally, choosing the reference measure $\rho = \mathcal{N}(0, I_d)$, the Monge map from $\rho$ to $\mu^{(i)}$ is affine and the LOT embedding admits the explicit form
\[
\Phi^{\mathrm{LOT}}\bigl(\mu^{(i)}\bigr)(x) = T_{\rho}^{\mu^{(i)}}(x) - x = (\sigma^{(i)} - 1)x + b^{(i)}.
\]
This provides a closed-form ground truth for the associated covariance operator $\Sigma^N$ defined in \eqref{eq:cov_operators}. 

In practice, for each $1\leq i\leq N$ we sample $m=200$ points $X^{(i)}_1,\ldots,X^{(i)}_m \overset{iid}{\sim} \mu^{(i)}$ and define the empirical measures $\hat{\mu}^{(i)} = \frac{1}{m}\sum_{j=1}^m \delta_{X^{(i)}_j}$. For a range of values of $K$, we compute the quantized measures $\hat{\mu}^{(i)}_K$ by applying the practical construction described after \eqref{eq:nubar} : we quantize the empirical mean measure and then define the weights by Vorono\"i counting. We then empirically evaluate the consistency of these quantities. 
In Figure~\ref{fig:gaussian_barycenter}, we report $W_2^2\bigl(\mu^{\mathrm{bar}}_K,\mu^{\mathrm{bar}}\bigr)$, where $\mu^{\mathrm{bar}}$ is the closed-form barycenter in \eqref{eq:gaussian_barycenter} and $\mu^{\mathrm{bar}}_K$ is a Wasserstein barycenter of $(\hat{\mu}^{(i)}_K)_{i=1}^N$.
In Figure~\ref{fig:gaussian_dispersion}, we plot $|\mathrm{SS}(\hat{\mu}_K)-\mathrm{SS}(\mu)|$, where $\mathrm{SS}(\hat{\mu}_K)$ is computed from $(\hat{\mu}^{(i)}_K)_{i=1}^N$ and $\mathrm{SS}(\mu)$ is defined in \eqref{eq:gaussian_dispersion}. To approximate the LOT embedding, we draw an empirical reference measure $\hat{\rho} = (1/m_0)\sum_{j=1}^{m_0} \delta_{y_j}$ with $m_0=100$ and $y_j\overset{iid}{\sim}\rho$.
For each $i$, we solve the discrete OT problem between $\hat{\rho}$ and $\hat{\mu}^{(i)}_K$, yielding an optimal plan $P^{(i)}\in\mathbb{R}^{m_0\times K}$, and we define the barycentric projection map
\begin{equation}\label{eq:barycentric_projection}
T^{(i)}_{ K}(y_j) = m_0\sum_{k=1}^K P^{(i)}_{jk}\, \bar{x}_k,
\end{equation}
where $(\bar{x}_k)_{k=1}^K$ are the shared atoms constructed with the mean-measure quantization.
We then compute the empirical covariance operator $\Sigma^{N}_K$ from the embeddings $\bigl(T^{(i)}_{{K}}-\mathrm{id}\bigr)$, and Figure~\ref{fig:gaussian_covariance} reports its Hilbert--Schmidt distance to the closed-form operator $\Sigma^N$.

\begin{figure}[t]
\centering
\begin{subfigure}[t]{0.33\columnwidth}
\centering
\includegraphics[width=\textwidth]{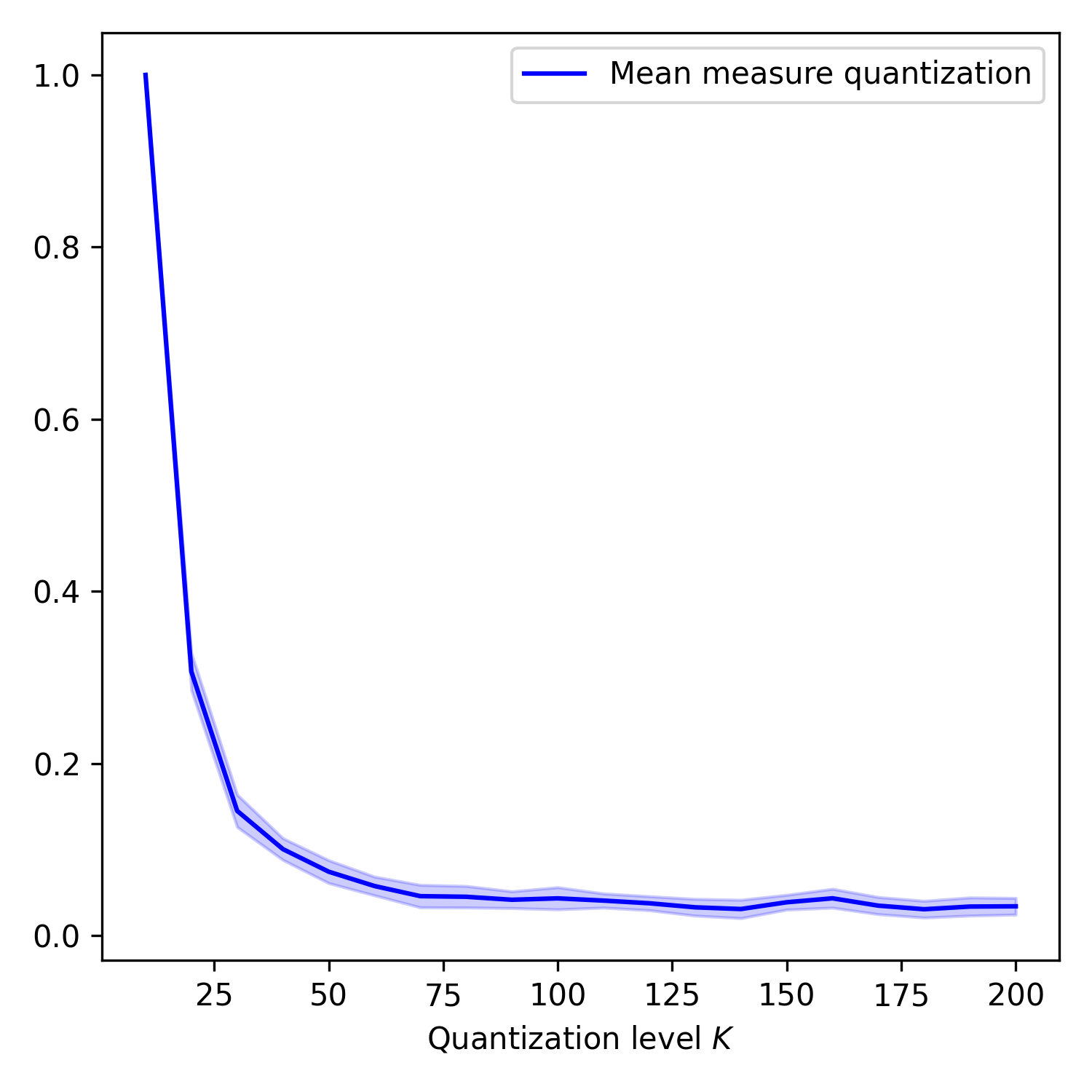}
\caption{Wasserstein barycenter}\label{fig:gaussian_barycenter}
\end{subfigure}%
\hfill%
\begin{subfigure}[t]{0.33\columnwidth}
\centering
\includegraphics[width=\textwidth]{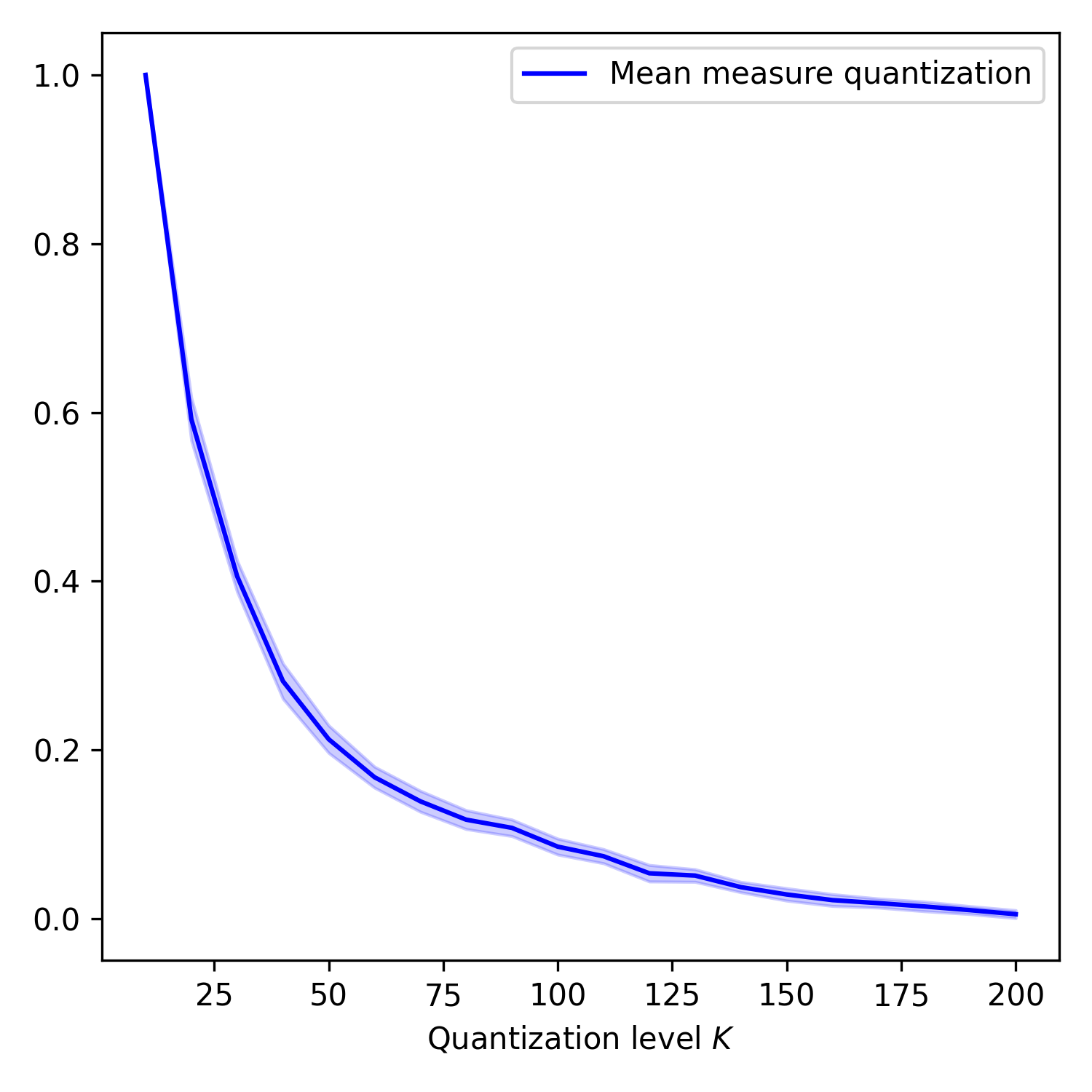}
\caption{Dispersion}\label{fig:gaussian_dispersion}
\end{subfigure}
\begin{subfigure}[t]{0.33\columnwidth}
\centering
\includegraphics[width=\textwidth]{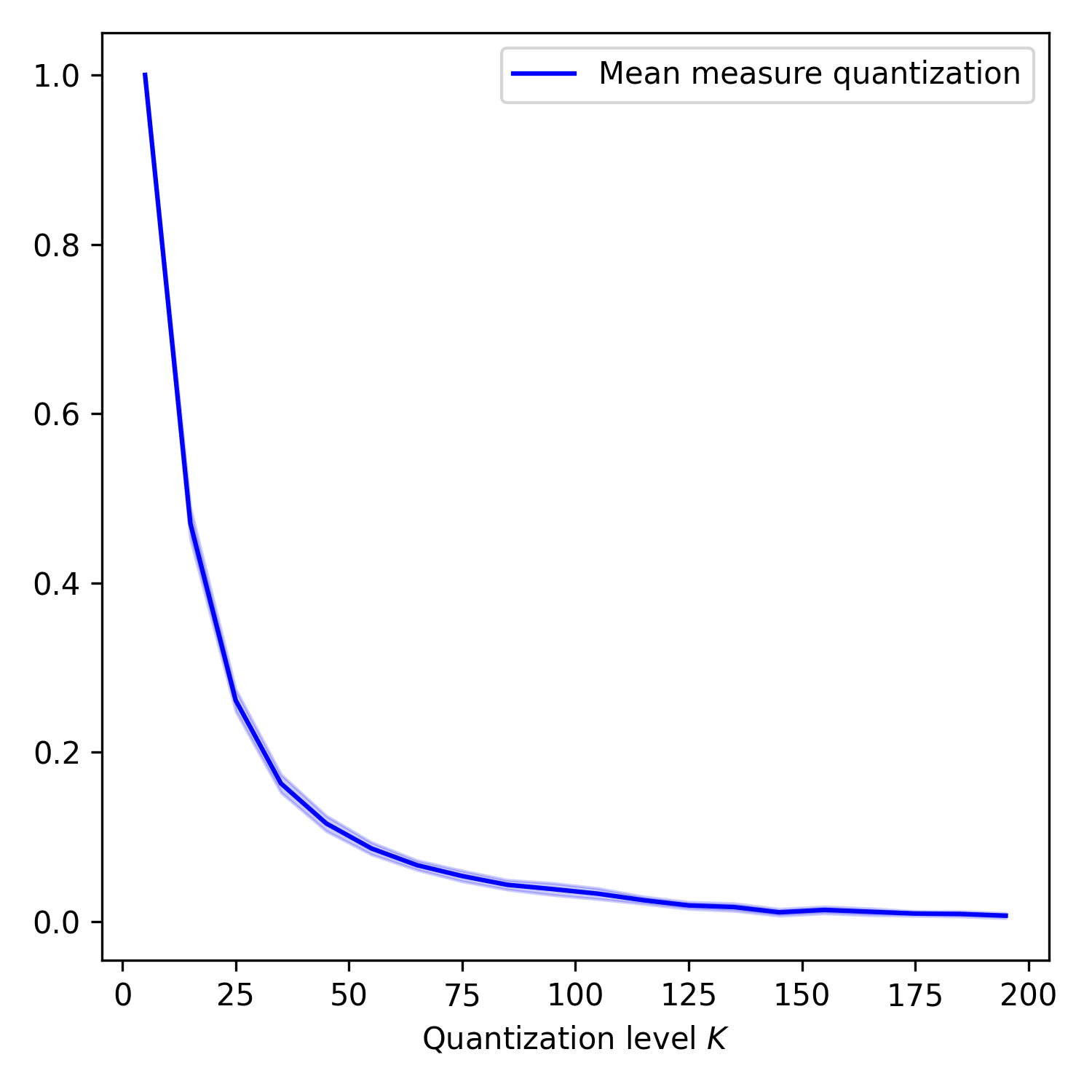}
\caption{Covariance operator for the LOT embedding}\label{fig:gaussian_covariance}
\end{subfigure}
\caption{Convergence of downstream quantities for the Gaussian synthetic dataset. Results are averaged over 20 independent trials and reported with  95\% confidence interval. }
\label{fig:gaussian_convergence}
\end{figure}

\subsubsection{Rare populations}

This experiment highlights the benefit of mean-measure quantization when a small cluster is present only in a subset of the measures.
We generate a collection of three measures on $\mathbb{R}^2$ as $5$-component Gaussian mixtures with fixed locations and covariances.
Across measures, only the mixture weights vary: one designated component plays the role of a \emph{rare population} whose weight is either (i) comparable to the other components , (ii) very small, or (iii) $0$ (absent). We then sample 1000 observations from each measure.

We compare the two quantization strategies with the same number of centers $K=5$.
Individual quantization (see Appendix~\ref{sec:comparison_individual_quantization}) runs a separate $k$-means per measure, yielding measure-specific atoms and weights.
Mean-measure quantization runs a single global $k$-means on the pooled samples, yielding shared atoms across measures; only the weights vary from one measure to another.

Figure~\ref{fig:rare_populations_quantization} visualizes the atoms and associated weights obtained with both methods. Note that in this experiment, the pooled samples contain observations from all three groups. With mean-measure quantization (top panel), measures are directly comparable since they share the same support, and the rare population manifests as a weight that vanishes (or becomes very small) in the corresponding group.
With individual quantization (bottom panel), the atoms are not aligned across datasets, making the representation less interpretable. 

\begin{figure}[t]
\centering
\begin{subfigure}[t]{\columnwidth}
\centering
\includegraphics[width=0.9\textwidth]{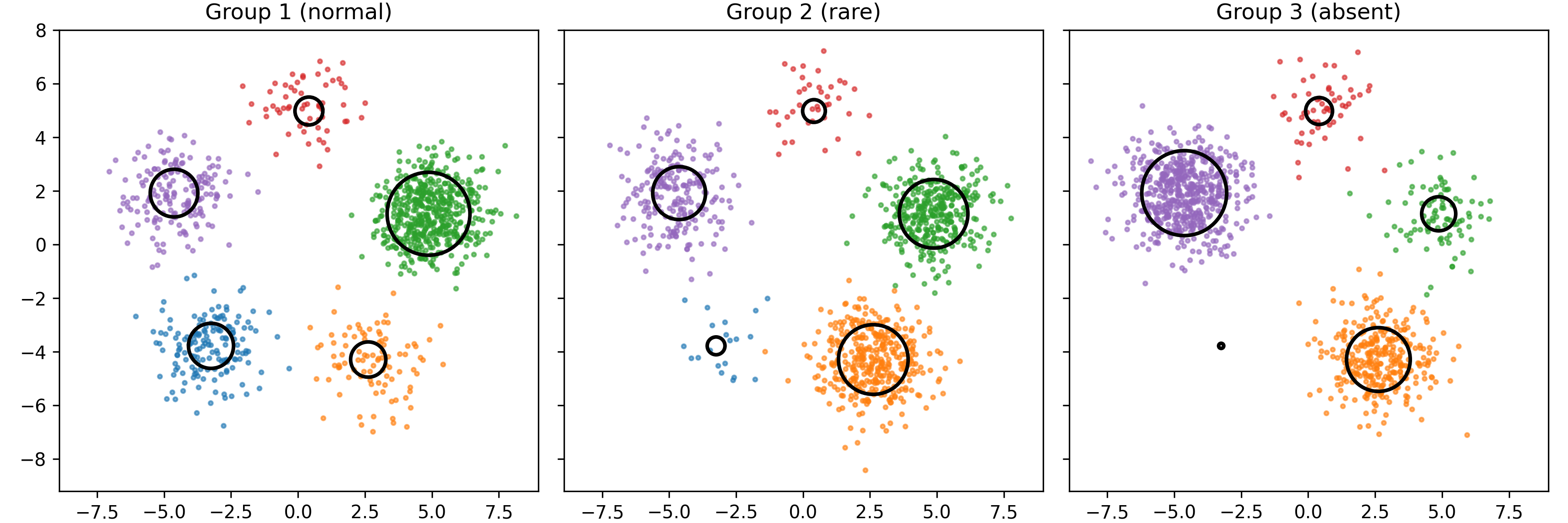}
\caption{Mean-measure quantization (shared atoms, varying weights).}\label{fig:rare_populations_mmq}
\end{subfigure}

\vspace{0.8em}

\begin{subfigure}[t]{\columnwidth}
\centering
\includegraphics[width=0.9\textwidth]{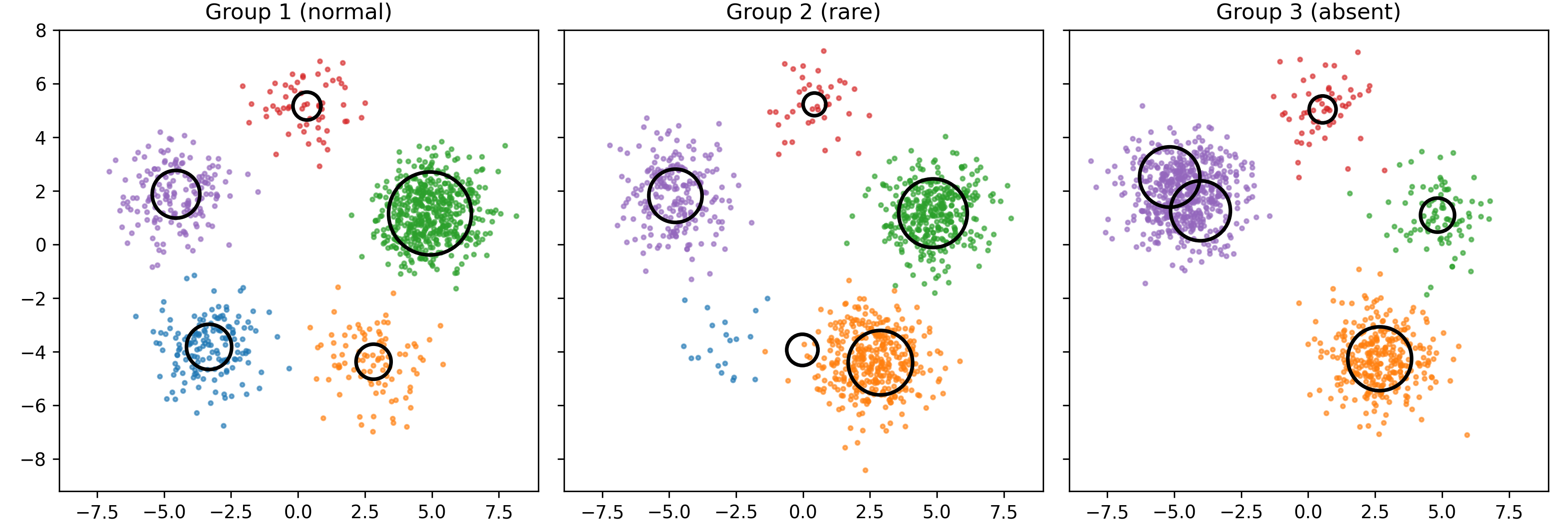}
\caption{Individual quantization (dataset-specific atoms and weights).}\label{fig:rare_populations_iq}
\end{subfigure}

\caption{Rare-population synthetic dataset: comparison of mean-measure quantization and individual quantization with the same number of centers ($K=5$). The black circles represent the centers, and their radii are proportional to the corresponding weights.}
\label{fig:rare_populations_quantization}
\end{figure}

\subsection{Real datasets}

In these real-world experiments, we focus on classification tasks computed from LOT embeddings of $K$-point discrete representations of the input measures. The goal is to reduce computational cost while preserving the information relevant for downstream learning. We compare the following approaches:

\begin{itemize}
    \item \textbf{Mean-measure quantization (Full MMQ).} We compute our proposed shared-support quantization from the mean measure $\overline{\mu}$,  yielding a support $(x_k)_{k=1}^K$ shared across all measures. Then, the quantized measures are $\sum_{k=1}^K \mu^{(i)}(V_{x_k})\delta_{x_k}.$ For $N$ discrete measures $\mu_1,\cdots,\mu_N$ supported on $m_1,\cdots,m_N$ points respectively, the overall complexity of mean measure quantization followed by the LOT embedding of the quantized measures is $O(Kd\sum_{i=1}^N m_i) + O(NKm_0\log(K+m_0))$, where the first term corresponds to the  $k$-means step (up to a constant number of Lloyd iterations) and the second to solving the $N$ discrete OT problems (between the reference measure of size $m_0$ and the quantized measures of size $K$) involved in the LOT embedding.

    \item \textbf{Subsampled mean-measure quantization (Subsampled MMQ).} In Full MMQ, the mean measure $\overline{\mu}$ corresponds to a pooled dataset with $\sum_{i=1}^N m_i$ points, which can be very large. To reduce the cost of the $k$-means step, we subsample $\overline{\mu}$ by drawing $\frac{1}{N}\sum_{i=1}^N m_i$ points i.i.d. from $\overline{\mu}$, compute the shared atoms $(x_k)_{k=1}^K$ from this subsample, and then construct, for each $i$, the quantized measure $\sum_{k=1}^K \mu^{(i)}(V_{x_k})\delta_{x_k}$.     The overall complexity of subsampled mean-measure quantization followed by the LOT embedding of the quantized measures is then $O\bigl(\frac{Kd}{N}\sum_{i=1}^N m_i\bigr) + O\bigl(NKm_0\log(K+m_0)\bigr)$.
    \item \textbf{Individual quantization (IQ) \cite{beugnot2021improving}.}  With individual quantization \eqref{eq:individual_quantization}, we run a separate $k$-means per measure, resulting in measure-specific atoms and weights $\sum_{k=1}^K \mu^{(i)}(V_{x^{(i)}_k})\delta_{x^{(i)}_k}.$
    \item \textbf{Random subset (Rand).} We randomly sample $K$ points $(\tilde{x}^{(i)}_1,\cdots, \tilde{x}^{(i)}_K)$ from $\mu^{(i)}$ and construct the associated (non-optimal) discrete measure
    $\sum_{k=1}^K \mu^{(i)}\bigl(V_{\tilde{x}^{(i)}_k}\bigr)\, \delta_{\tilde{x}^{(i)}_k}.$
    \item \textbf{Empirical measure (Emp).} We randomly select $K$ points $(\tilde{x}^{(i)}_1,\cdots, \tilde{x}^{(i)}_K)$ from $\mu^{(i)}$ and construct the empirical measure $ \frac{1}{K} \sum_{k=1}^K \delta_{\tilde{x}^{(i)}_k}.$
\end{itemize}

In the expressions above, $V_{x}$ denotes the (tie-broken) Vorono\"i cell associated with the support point $x$.

\subsection{Real dataset: flow cytometry}

In this section, we use flow cytometry datasets provided in \cite{thrun2022flow} and publicly available in \href{https://data.mendeley.com/}{Mendeley Data} to illustrate the practical relevance of our method through a supervised classification task. The dataset is made of $N = 108$ cytometry measures (i.e. point clouds), corresponding to peripheral blood (pB), healthy bone marrow (BM) and leukemic bone marrow samples. Each measure constains from 100,000 to 1,000,000 points in dimension $d = 10$, which prevents the direct use of OT-based methods without a prior quantization step. For each method described above, we compute the LOT embeddings of the resulting $K$-point discrete representations. We use as reference measure the uniform measure on $[0,1]^d$, from which we sample $m_0 = 100$ points.  The transport maps are approximated via barycentric projection \eqref{eq:barycentric_projection} of the optimal transport plans between the empirical reference measure and the quantized measures. The embeddings lie in the linear space $\mathbb{R}^{K \cdot m_0}$, where we perform supervised classification to predict whether a sample corresponds to peripheral blood, healthy bone marrow, or leukemic bone marrow. We evaluate classification performance using stratified shuffle-split cross-validation, which preserves class proportions across splits. For each of 30 random splits (67\% training and 33\% testing), we fit an LDA classifier on the training set and report its accuracy on the test set. Figure \ref{fig:accuracy_cytometry} displays the mean classification accuracy together with its standard deviation for the different methods. Figure \ref{fig:time_cytometry} reports the computational time required to compute the $K$-point discrete representations for each method. We find that the quantization-based approaches (MMQ and IQ) outperform the other methods. Interestingly, the classifier achieves essentially the same performance when using MMQ computed on the full dataset as when using subsampled MMQ. While individual quantization yields the highest accuracy results, subsampled MMQ remains very close in performance and offers a substantial reduction in computational cost.
\begin{figure}[t]
\centering
\begin{subfigure}[t]{0.45\columnwidth}
\centering
\includegraphics[width=\textwidth]{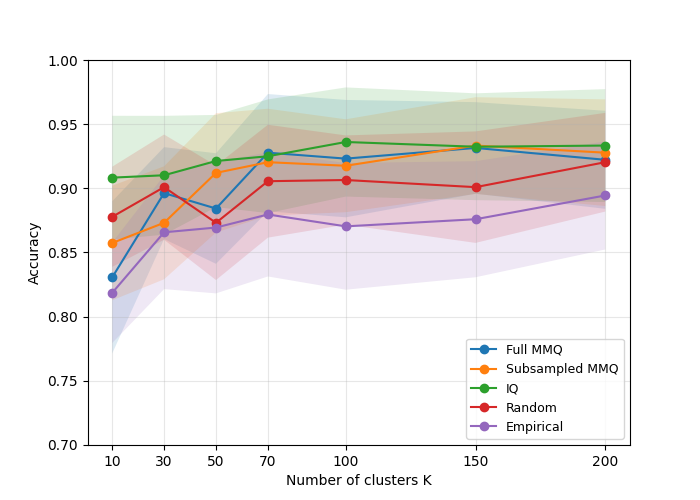}
\caption{LDA Accuracy}\label{fig:accuracy_cytometry}
\end{subfigure}%
\hfill%
\begin{subfigure}[t]{0.45\columnwidth}
\centering
\includegraphics[width=0.9\textwidth]{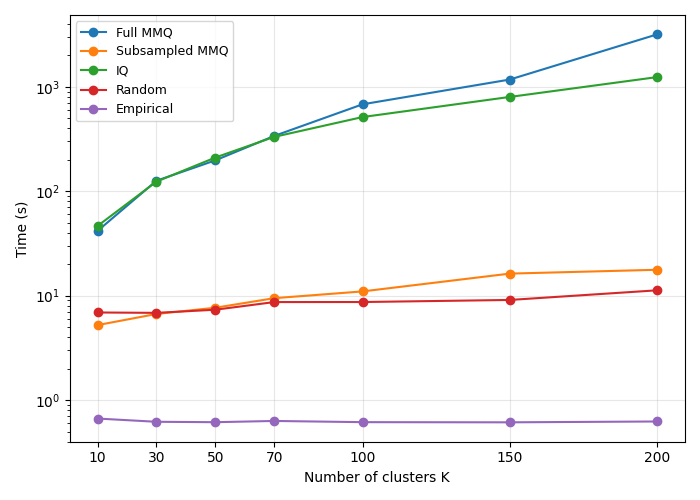}
\caption{Execution time}\label{fig:time_cytometry}
\end{subfigure}
\caption{\textbf{Flow cytometry dataset.} LDA classification accuracy and executions times against the number of clusters $K$.}
\label{fig:xp_cytometry}
\end{figure}

 Additionally, after computing the LOT embeddings of the quantized measures, we perform a 2-component PCA on the embedded measures and observe the projections of the data in Figure \ref{fig:2d_pca_cytometry}. It is clear that the representations stabilize when $K\geq 50$ and that large values of $K$ are not necessary.

\begin{figure}[t]
\centering
\begin{subfigure}[t]{\columnwidth}
\centering
\includegraphics[width=0.92\textwidth]{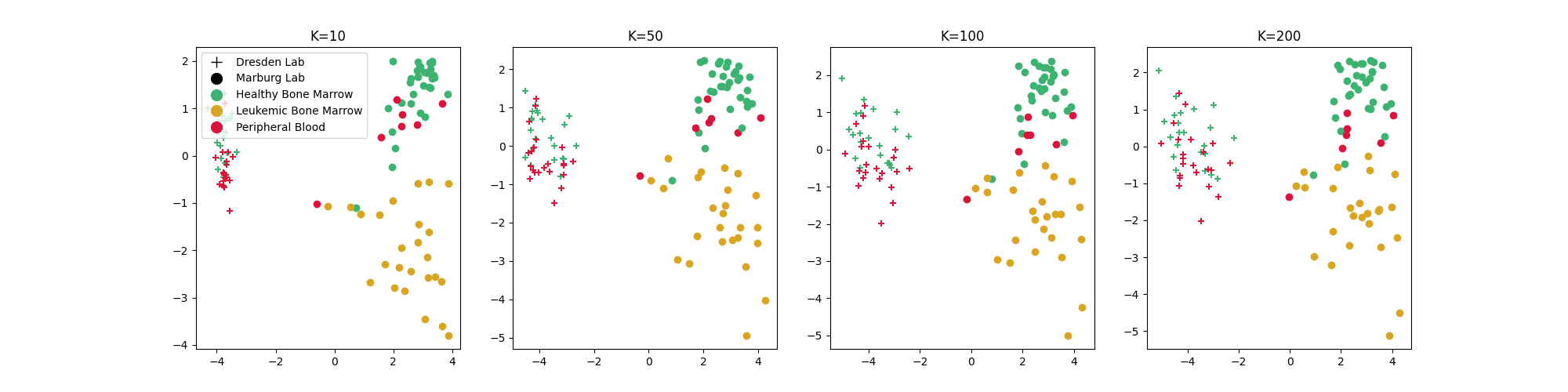}
\caption{Full MMQ}\label{fig:2d_pca_full_mmq}
\end{subfigure}%

\begin{subfigure}[t]{\columnwidth}
\centering
\includegraphics[width=0.92\textwidth]{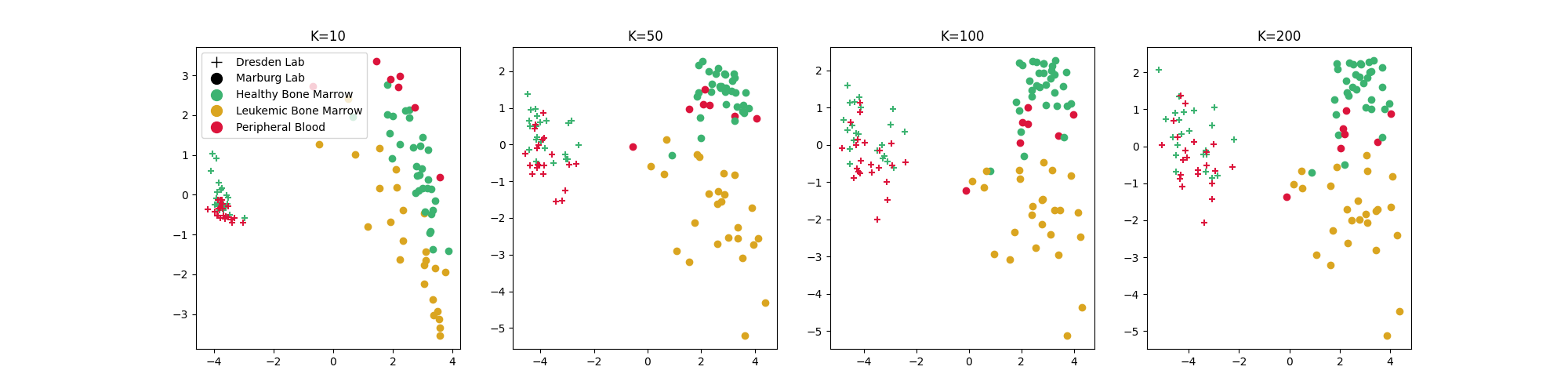}
\caption{Subsampled MMQ}\label{fig:2d_pca_subsampled_mmq}
\end{subfigure}
\caption{\textbf{Flow cytometry dataset.} Projection of the $N=108$ measures on the first two components of PCA.}
\label{fig:2d_pca_cytometry}
\end{figure}

\subsection{Real dataset: earth images}

We perform similar experiments on a set of images provided by the Airbus company. The dataset consists in $N = 1000$ images of size $128\times 128$ captured by a SPOT satellite. The images are divided into two categories: those with the presence of a wind turbine and those without, see Figures \ref{fig:wind_turbines} and \ref{fig:no_wind_turbines}. Each image is viewed as a discrete probability distribution on the RGB space, that is each pixel is represented by a point in $\mathbb{R}^3$. The size $N$ of the dataset as well as the number of pixels ($m=128^2$) prevents from directly using OT-based methods. We therefore reduce the size of supports with the different quantization methods for different values of $K$. We then compute the LOT embeddings of the quantized measures, using as reference measure the uniform measure on $[0,1]^3$, from which we sample $m_0 = 100$ points. We then carry out supervised classification. We also evaluate classification performance using stratified shuffle-split cross-validation. For each of 30 random splits (67\% training and 33\% testing), we fit an LDA classifier on the training set and report its accuracy on the test set. Figure \ref{fig:airbus_accuracy} displays the mean classification accuracy together with its standard deviation for the different methods. Figure \ref{fig:airbus_time} reports the computational time required to compute the $K$-point discrete representations for each method. Once again, individual quantization yields better accuracy results, closely followed by both MMQ-based methods. Overall, subsampled MMQ offers the best trade-off between predictive performance and computational cost.

\begin{figure}
\centering
  \begin{subfigure}[b]{0.3\columnwidth}
    \includegraphics[width=\linewidth]{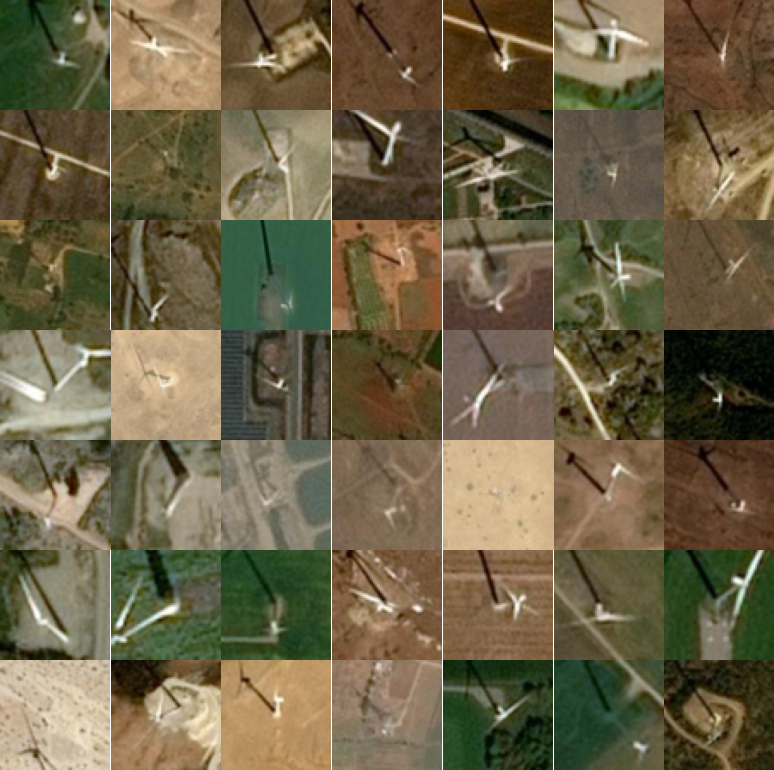}
    \caption{With wind turbine}
    \label{fig:wind_turbines}
  \end{subfigure}
  \begin{subfigure}[b]{0.3\columnwidth}
    \includegraphics[width=\linewidth]{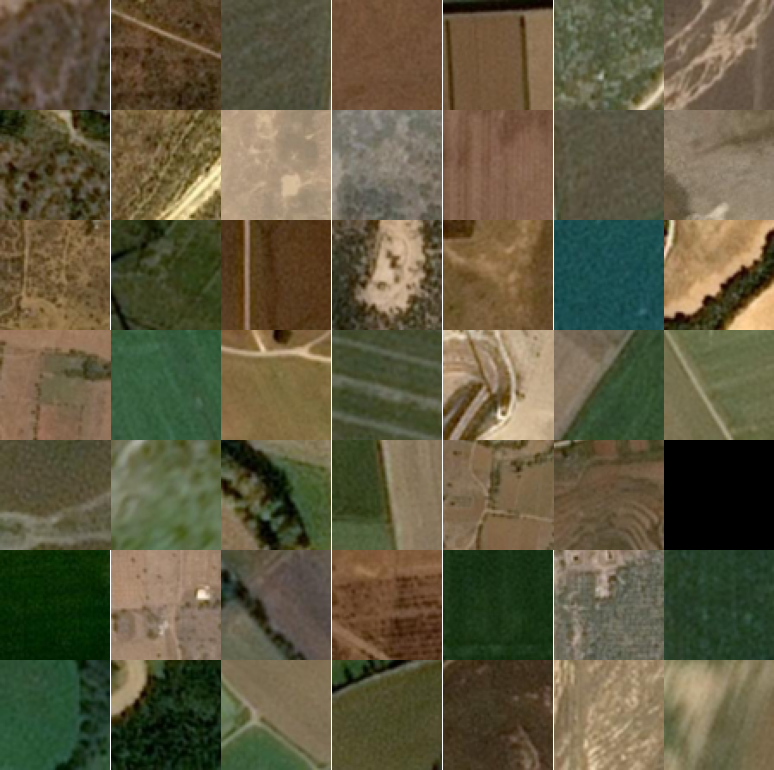}
    \caption{Without wind turbines}
    \label{fig:no_wind_turbines}
  \end{subfigure}
  \caption{\textbf{Earth image dataset.} Examples of images sampled from the Airbus dataset.}\label{fig:airbus}
\end{figure}

\begin{figure}[t]
\centering
\begin{subfigure}[t]{0.45\columnwidth}
\centering
\includegraphics[width=\textwidth]{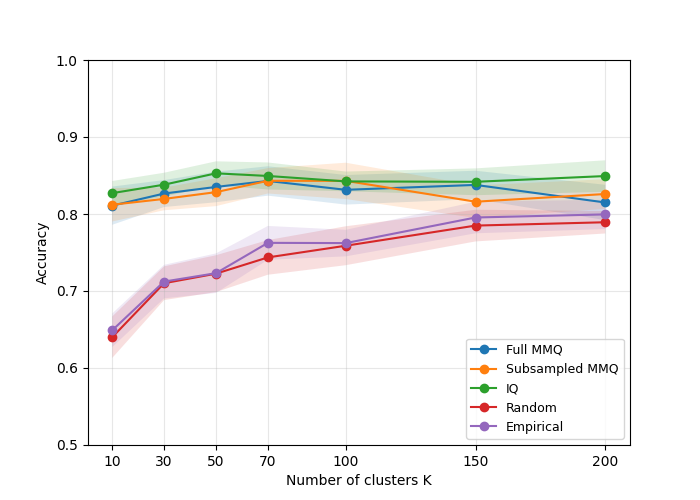}
\caption{LDA Accuracy}\label{fig:airbus_accuracy}
\end{subfigure}%
\hfill%
\begin{subfigure}[t]{0.45\columnwidth}
\centering
\includegraphics[width=0.9\textwidth]{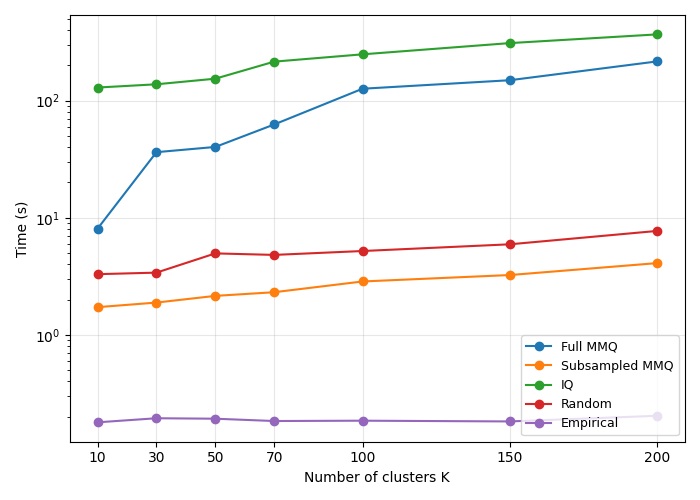}
\caption{Execution time}\label{fig:airbus_time}
\end{subfigure}
\caption{\textbf{Earth image dataset.} LDA classification accuracy and executions times against the number of clusters $K$.}
\label{fig:xp_airbus}
\end{figure}

\section{Conclusion}\label{sec:conclusion}

In this work, we studied mean-measure quantization, which approximates each input measure with a $K$-point discrete representation while enforcing a shared support across all measures. We established consistency results for the resulting quantized measures, as well as for several downstream tasks computed from them. Numerical experiments on synthetic and real datasets suggest that mean-measure quantization can be more efficient than individual quantization in large-scale settings: the  large-support mean measure can be subsampled by a factor $N$ (number of input measures) with essentially no loss in performance, yielding substantial computational savings.

One limitation of our quantization approach is its sensitivity to the curse of dimensionality, a challenge common to many statistical problems in optimal transport. In future works, one could bypass the dependence on the ambient dimension by relying on a notion of intrinsic dimension, as studied in \cite{weed2019sharp}.

\bibliography{biblio}
\bibliographystyle{plain}

\appendix

\section{Vorono\"i reformulation}\label{sec:voronoi}

Recall the classical Vorono\"i sets:

$$
    V_{x_k} = \bigl\{ y \in \mathbb{R}^d \|y-x_k\|^2 \leq \|y-x_\ell\|^2 \ \forall \ell \neq k \bigr\} \qquad \mbox{for } 1\leq k\leq K.
$$

For discrete probability measures, these sets may overlap on boundaries. Define inductively:

\begin{equation}\label{eq:discrete_voronoi_partition}
 U_{x_1} =  V_{x_1} \mbox{ and } \\
  U_{x_k} =    V_{x_k} \backslash  \bigcup_{j < k} U_{x_j} \quad  \mbox{ for } k \geq 2,  
\end{equation}

Then $(U_{x_k})_{k=1}^K$ form a partition of $\mathbb{R}^d$ \cite{graf2000foundations}[Chapter 1]  for $X\in F_K$.

\begin{lemma}\label{lem:discrete}
Let $X = (x_1,\cdots,x_K) \in F_K$ in \eqref{def:FK} a vector of distinct points and consider the tie-breaking Vorono\"i partition defined in \eqref{eq:discrete_voronoi_partition}. Then, for any probability measure $\mu \in \mathcal{P}(\mathcal{X})$, one has that

\begin{align*}
\min\limits_{a\in\Sigma_K}  W_2^2\Bigl(\mu,\sum\limits_{k=1}^K a_k\delta_{x_k}\Bigr) &= W_2^2\Bigl(\mu,\sum\limits_{k=1}^K \mu( U_{x_k})\delta_{x_k}\Bigr)\\ &= \sum_{k=1}^{K}   \int_{ U_{x_k}}    \|x_{k}-y\|^2 \mathrm{d} \mu(y)\\  &=  \int_{\mathbb{R}^d}  \min_{1 \leq k \leq K} \{ \|x_{k}-y\|^2 \} \mathrm{d} \mu(y).
\end{align*}
\end{lemma}

\begin{proof}[Proof of Lemma \ref{lem:discrete}]
Let $X = (x_1,\cdots,x_K) \in F_K$. From the dual formulation of the Kantorovich problem  \cite{peyre2019computational}[Equation 5.7], we have that, for  any $a  \in\Sigma_K$,
\begin{eqnarray}
W_2^2\Bigl(\mu,\sum\limits_{k=1}^K a_k\delta_{x_k}\Bigr) &=& \sup\limits_{\beta\in\mathbb{R}^K} \sum\limits_{k=1}^K \beta_k a_k + \int_{\mathbb{R}^d}\beta^c(y)\mathrm{d}\mu(y)\nonumber\\ & = &  \sup\limits_{\beta \in \mathbb{R}^K} \sum\limits_{k=1}^K a_k \beta_k + \int_{\mathbb{R}^d} \left(\min_{1 \leq k \leq K} \{ \|x_{k}-y\|^2 - \beta_{k}\}\right) \mathrm{d} \mu(y) \nonumber \\
& \geq &  \int_{\mathbb{R}^d}  \min_{1 \leq k \leq K} \{ \|x_{k}-y\|^2 \} \mathrm{d} \mu(y), \label{ineq:key}
\end{eqnarray}
where the above inequality is obtain by taking $ \beta_{k} = 0$ for all $1 \leq k \leq K$. Then, since $X = (x_1,\cdots,x_K) \in F_K$, one has  that $( U_{x_k})_{1 \leq k \leq K}$ is a partition of $ \mathbb{R}^d$, and we may define 
$$
T_K(y) = \sum_{k=1}^{K} x_k \1_{ U_{x_k}}(y) \; \mbox{for}\  y \in \mathbb{R}^d,
$$
that is a mapping from $\mathbb{R}^d$ to $X$. Introducing the probability measure
$
\mu_K = \sum\limits_{k=1}^K \mu( U_{x_k})\delta_{x_k},
$
it is not difficult to see that $T_{K\#} \mu = \mu_{K}$ where the notation $T _{\#} \mu$ denotes the push-forward of a measure $\mu$ by the mapping $T$. Now, we let $\pi_{K} = (\mathrm{id} \times T_K)_{\#} \mu$ that obviously belongs to the set of transport plans  $\Pi(\mu,\mu_K)$. From the definition of the Vorono\"i partition  $( U_{x_k})_{1 \leq k \leq K}$, one can than check that, for any $y \in \mathbb{R}^d$, (see e.g.\ \cite{pages2015introduction})
$$
\|y - T_K(y)\|^2 =  \min_{1 \leq k \leq K} \{ \|x_{k}-y\|^2 \}.
$$
Consequently, we obtain the following equalities
$$
\int_{\mathcal{X}\times\mathcal{X}}  \|x - y\|^2\mathrm{d}\pi_{K}(x,y) =  \int_{\mathbb{R}^d} \|y - T_K(y) \|^2 \mathrm{d} \mu(y) =  \int_{\mathbb{R}^d}  \min_{1 \leq k \leq K} \{ \|x_{k}-y\|^2 \} \mathrm{d} \mu(y).
$$
Inserting the above equality into \eqref{ineq:key}, we thus have that, for  any $a  \in\Sigma_K$,
$$
W_2^2(\mu,\sum\limits_{k=1}^K a_k\delta_{x_k}) \geq \int_{\mathcal{X}\times\mathcal{X}}  \|x - y\|^2\mathrm{d}\pi_{K}(x,y).
$$
Since $W_2^2\Bigl(\mu,\mu_{K}\Bigr) = \min\limits_{\pi \in \Pi(\mu,\mu_K)} \int_{\mathcal{X}\times\mathcal{X}} \|x - y\|^2\mathrm{d}\pi(x,y)$, we directly obtain from the above inequality that $W_2^2(\mu,\mu_{K}) = \int_{\mathcal{X}\times\mathcal{X}}  \|x - y\|^2\mathrm{d}\pi_{K}(x,y)$, which implies
$$
\min\limits_{a\in\Sigma_K}  W_2^2\Bigl(\mu,\sum\limits_{k=1}^K a_k\delta_{x_k}\Bigr)  = W_2^2\Bigl(\mu,\mu_{K}\Bigr) =  \int_{\mathbb{R}^d} \|y - T_K(y) \|^2 \mathrm{d} \mu(y) =  \int_{\mathbb{R}^d}  \min_{1 \leq k \leq K} \{ \|x_{k}-y\|^2 \} \mathrm{d} \mu(y),
$$
which concludes the proof.
\end{proof}

\section{Individual quantization of measures}\label{sec:comparison_individual_quantization}

When dealing with $N$ large-support measures $\mu^{(1)},\cdots, \mu^{(N)}$, a natural approach is to independently approximate each measure $\mu^{(i)}$ by its optimal quantization:

$$
\mu^{(i)}_K = \sum\limits_{k=1}^K a^{(i)}_k \delta_{x^{(i)}_k},
$$

where $a^{(i)}$ and $X^{(i)} = (x^{(i)}_1,\cdots,x^{(i)}_K)$ are minimizers of \eqref{quantif_pb} for $\mu = \mu^{(i)}$. Then, one can derive an analogue of  Theorem \ref{th:conv} for individual quantization.

\begin{proposition}\label{prop:individual_quantization_conv}
    Let $\mathbb{P}^N = \frac{1}{N}\sum\limits_{i=1}^N \delta_{\mu^{(i)}}$ and $\mathbb{P}^N_K = \frac{1}{N}\sum\limits_{i=1}^N \delta_{\mu^{(i)}_K}$ where $(\mu^{(i)}_K)_{1\leq i\leq N}$ is the optimal quantization of $(\mu^{(i)})_{1\leq i\leq N}$. Then,

    $$
    \mathcal{W}_2^2(\mathbb{P}^N_K,\mathbb{P}^N) = 
    \frac{1}{N}\sum_{i=1}^N W_2^2(\mu^{(i)},{\mu}^{(i)}_K) = \frac{1}{N}\sum_{i=1}^N \varepsilon_K( \mu^{(i)}).
    $$
\end{proposition}

\begin{proof}[Proof of Proposition \ref{prop:individual_quantization_conv}]

For a fixed $N\geq 1$, since $\mathbb{P}^N$ and $\mathbb{P}^N_K$ are discrete uniform measures of the same size, one can actually restrict the optimization set in \eqref{other_W2} to permutations $\sigma \in \mathrm{Perm}(N)$  \cite{peyre2019computational}[Equation 2.2] in the following sense:
\begin{align*}
\mathcal{W}_2^2(\mathbb{P}^N,\mathbb{P}^N_K) &= \min\limits_{\sigma \in \mathrm{Perm}(N)} \frac{1}{N}\sum\limits_{i=1}^N  W_2^2(\mu^{(i)},\mu^{(\sigma(i))}_K) \\ 
\end{align*}
However, it follows by the definition of optimal quantization of $\mu^{(i)}$ that, for any $1\leq j\leq N$,
\begin{equation*}
W_2^2(\mu^{(i)},\mu^{(i)}_K) \leq  W_2^2(\mu^{(i)},\mu^{(j)}_K).
\end{equation*}

It is then easy to see that in both cases, the optimal permutation minimizing \eqref{eq:permut} is the identity $\sigma(i) = i$ for all $1 \leq i \leq N$, and that we have
$$\mathcal{W}_2^2(\mathbb{P}^N,\mathbb{P}^N_K) = \frac{1}{N}\sum\limits_{i=1}^N W_2^2(\mu^{(i)},\mu^{(i)}_K) = \frac{1}{N}\sum_{i=1}^N \varepsilon_K( \mu^{(i)})$$
which concludes the proof.

\end{proof}

Similar convergence results for other downstream quantities considered in the paper (e.g., Wasserstein barycenters, dispersion, clustering performance, and LOT covariance operators) when using individual quantization.

\section{Proofs of Section \ref{sec:main_results}}\label{appendixA}

\subsection{Proof of Proposition \ref{prop:pb_equivalence}}\label{sec:proof_pb_equivalence}

\begin{proof}[Proof of Proposition \ref{prop:pb_equivalence}]
Let $X \in F_K$. Applying Lemma \ref{lem:discrete}, we obtain that, for any $1 \leq i \leq N$,
$$
\min\limits_{a^{(i)} \in \Sigma_K} W_2^2\Bigl(\mu^{(i)}, \sum\limits_{k=1}^K a^{(i)}_k \delta_{x_k}\Bigr) =  W_2^2\Bigl(\mu,\sum\limits_{k=1}^K \mu^{(i)}( V_{x_k})\delta_{x_k}\Bigr) = \sum_{k=1}^K \int_{ V_{x_k}}  \|x_k-y\|^2\mathrm{d}\mu^{(i)}(y) ,
$$
and thus we have that
\begin{eqnarray*}
\min\limits_{a\in(\Sigma_K)^N} \frac{1}{N}\sum\limits_{i=1}^N W_2^2\Bigl(\mu^{(i)}, \sum\limits_{k=1}^K a^{(i)}_k \delta_{x_k}\Bigr) & = &  \frac{1}{N} \sum_{i=1}^N W_2^2\Bigl(\mu,\sum\limits_{k=1}^K \mu^{(i)}( V_{x_k})\delta_{x_k}\Bigr)  \\
& = & \frac{1}{N} \sum_{i=1}^N\sum_{k=1}^K \int_{ V_{x_k}}  \|x_k-y\|^2\mathrm{d}\mu^{(i)}(y) \\
& = & \sum\limits_{k=1}^K \int_{ V_{x_k}} \|x_k-y\|^2\mathrm{d}\overline{\mu}(y) \\
& = & W_2^2\Bigl(\overline{\mu},\sum\limits_{k=1}^K \overline{\mu}( V_{x_k})\delta_{x_k}\Bigr)\\ &=&  \int_{\mathbb{R}^d}  \min_{1 \leq k \leq K} \{ \|x_{k}-y\|^2 \} \mathrm{d}  \overline{\mu}(y) ,
\end{eqnarray*}
where we again apply   Lemma \ref{lem:discrete} to derive the last equality above. Finally, from the assumption that the cardinality of the support of $\overline{\mu}$ is larger than $K$, we obtain from  \cite{JMLR:v21:18-804}[Proposition 2]  that
$$
\min\limits_{X\in F_K}   W_2^2\Bigl(\overline{\mu},\sum\limits_{k=1}^K \overline{\mu}( V_{x_k})\delta_{x_k}\Bigr) = \min\limits_{X\in (\mathbb{R}^d)^K}   W_2^2\Bigl(\overline{\mu},\sum\limits_{k=1}^K \overline{\mu}(V_{x_k})\delta_{x_k}\Bigr) 
$$
which concludes the proof.
\end{proof}

\subsection{Proof of Theorem \ref{th:conv}}\label{sec:proof_main_th}

\begin{proof}[Proof of Theorem \ref{th:conv}]

For a fixed $N\geq 1$, since $\mathbb{P}^N$ and $\mathbb{P}^N_K$ are discrete uniform measures of the same size, one can actually restrict the optimization set in \eqref{other_W2} to permutations $\sigma \in \mathrm{Perm}(N)$  \cite{peyre2019computational}[Equation 2.2] in the following sense:
\begin{equation}\label{eq:permut}
\mathcal{W}_2^2(\mathbb{P}^N,\mathbb{P}^N_K) = \min\limits_{\sigma \in \mathrm{Perm}(N)} \frac{1}{N}\sum\limits_{i=1}^N  W_2^2(\mu^{(i)},{\mu}^{(\sigma(i))}_K) 
\end{equation}

However, ${\mu}^{(i)}_K$ defined in \eqref{eq:nubar} corresponds to the discrete probability measure supported on  $\bar{X}$ that best approximates $\mu^{(i)}$. In other words, $W_2^2(\mu^{(i)},{\mu}^{(i)}_K)\leq W_2^2(\mu^{(i)},\sum_{k=1}^K a_k\delta_{\bar{x}_k})$ for any weight vector $a\in\Sigma_K$. In particular, we have that,  for any $1\leq j\leq N$, 
\begin{equation}
 W_2^2(\mu^{(i)},{\mu}^{(i)}_K)\leq W_2^2(\mu^{(i)},{\mu}^{(j)}_K). \label{ineq:nubar}
\end{equation}
Using Inequality \eqref{ineq:nubar}, it is then easy to see that the optimal permutation minimizing \eqref{eq:permut} is the identity $\sigma(i) = i$ for all $1 \leq i \leq N$, and that we have
$$\mathcal{W}_2^2(\mathbb{P}^N,\mathbb{P}^N_K) = \frac{1}{N}\sum\limits_{i=1}^N W_2^2(\mu^{(i)},{\mu}^{(i)}_K)$$
We obtain from Proposition \ref{prop:pb_equivalence}  that 
$$
  \frac{1}{N}\sum\limits_{i=1}^N W_2^2(\mu^{(i)},\mu^{(i)}_K) = W_2^2\left(\sum\limits_{k=1}^K \overline{\mu}(V_{\bar{x}_k})\delta_{\bar{x}_k}, \overline{\mu}\right) =  \varepsilon_K(\overline{\mu}),
$$
which concludes the proof.
\end{proof}

\section{Proofs of Section \ref{sec:downstream_tasks}}

\subsection{Proof of Proposition \ref{bary_conv}}\label{sec:proof_bary_conv}

\begin{proof}[Proof of Proposition \ref{bary_conv}] For a fixed $N$, and thanks to Theorem \ref{th:conv}, we have that the sequence of probability measures $(\mathbb{P}^N_K)_{K\geq 1}$ such that $\mathbb{P}^N_K= \frac{1}{N}\sum_{i=1}^N \delta_{\mu^{(i)}_K} \subset \mathcal{P}(\mathcal{P}(\mathcal{X}))$ converges towards $\mathbb{P}^N$, that is $\mathcal{W}_2(\mathbb{P}^N_K,\mathbb{P}_N)\overset{K\rightarrow \infty}{\longrightarrow} 0$. Additionally, the Wasserstein barycenter of $\mathbb{P}^N$ is unique (Proposition 3.5 in \cite{agueh2011barycenters}) since at least one of the probability measures $\mu^{(i)}, 1\leq i\leq N$ is a.c.\ by hypothesis. Therefore, using \cite{le2017existence}[Theorem 3], we immediately obtain that the sequence of barycenters of $(\mathbb{P}^N_K)_{K\geq 1}$ converges towards the barycenter of $\mathbb{P}^N$ in the $W_2$ distance.

\end{proof}

\subsection{Proof of Proposition \ref{prop:dispersion}}\label{sec:proof_dispersion}

\begin{proof}[Proof of Proposition \ref{prop:dispersion}]
From the triangle inequality, one can write:
$$W_2(\mu^{(i)}_K,\mu^{(j)}_K) \leq W_2(\mu^{(i)}_K, \mu^{(i)})+W_2(\mu^{(i)},\mu^{(j)}) + W_2(\mu^{(j)},\mu^{(j)}_K).$$

From Young's inequality, $2ab\leq a^2+b^2$ and $2ab\leq \lambda a^2 + \frac{b^2}{\lambda}$ for any $\lambda>0$ and any real numbers $a,b$ and $c$. Then it holds that :

\begin{equation}\label{eq:young}
(a+b+c)^2 \leq (2+\lambda)a^2 + (2+\lambda)c^2 + \Bigl(1+\frac{2}{\lambda}\Bigr) b^2    
\end{equation}

Squaring the triangle inequality and using \eqref{eq:young} yields to:
\begin{equation}\label{eq:var1}
W_2^2(\mu^{(i)}_K,\mu^{(j)}_K)
\leq (2+\lambda)W_2^2(\mu^{(i)}_K, \mu^{(i)})+\Bigl(1+\frac{2}{\lambda}\Bigr)W_2^2(\mu^{(i)},\mu^{(j)}) + (2+\lambda)W_2^2(\mu^{(j)},\mu^{(j)}_K) 
\end{equation}
We now sum inequality \eqref{eq:var1} over all $1\leq i,j \leq N$, and divide by $N^2$:
\begin{align*}
    \mathrm{SS}(\mu_K) &= \frac{1}{N^2} \sum\limits_{i,j=1}^N W_2^2(\mu^{(i)}_K, \mu^{(j)}_K) \leq  \frac{2}{N}(2+\lambda) \sum\limits_{i=1}^N W_2^2(\mu^{(i)}_K,\mu^{(i)}) +\frac{1}{N^2}\Bigl(1+\frac{2}{\lambda}\Bigr)\sum\limits_{i,j=1}^N W_2^2(\mu^{(i)},\mu^{(j)})
\end{align*}
Hence, by Theorem \ref{th:conv}, we obtain that $ \mathrm{ SS }(\mu_K) \leq  (4+2\lambda) \varepsilon_K + \Bigl(1+\frac{2}{\lambda}\Bigr)\mathrm{SS}(\mu)$, which concludes the proof.
\end{proof}

\subsection{Proof of Proposition \ref{prop:pairwise_dist}}\label{sec:proof_pairwise_dist}

Proposition \ref{prop:pairwise_dist} is obtained by combining Lemmas \ref{lemma:pairwise_dist1} and \ref{lemma:pairwise_dist2} below.

\begin{lemma}\label{lemma:pairwise_dist1}
Suppose that the probability measures $(\mu^{(i)})_{i=1}^N$ are a.c. Then, one has for $1\leq i,j\leq N$ and the mean-measure quantization approach in equation \eqref{eq:nubar}
$$W_2^2(\mu^{(i)}_K,\mu^{(j)}_K) \leq  3W_2^2(\mu^{(i)},\mu^{(j)}) +  6\max\limits_{1\leq k\leq K} \mathrm{diam}(V_k),$$
with $\mathrm{diam}(V_k) = \max\limits_{x,y\in V_k} \|x-y\|^2$ and $(V_k)_{k=1}^K$ the Vorono\"i cells obtained from the $K$-points quantization of $\bar{\mu}$.
\end{lemma}

\begin{proof}[Proof of Lemma \ref{lemma:pairwise_dist1}] 
We first recall that the dual formulation of OT between the discrete measures $\mu^{(i)}_K$ and $\mu^{(j)}_K$ (see e.g. \cite{peyre2019computational}) is given by
\begin{equation}\label{eq:dual}
W_2^2(\mu^{(i)}_K,\mu^{(j)}_K) = \max_{(\alpha,\beta)\in\Phi} \ \sum_{k=1}^K a_k^{(i)}\alpha_k + \sum_{k=1}^K a_k^{(j)}\beta_k
\end{equation}
where $\Phi := \{(\alpha,\beta)\in\mathbb{R}^K\times\mathbb{R}^K \ \mbox{such that for all}\ 1\leq k,l\leq K, \alpha_k+\beta_l\leq \Vert \bar{x}_k-\bar{x}_l\Vert^2\}$.
Let $\alpha^{ij},\beta^{ij}\in\mathbb{R}^K$ be optimal Kantorovich potentials for $\mu^{(i)}_K$ and $\mu^{(j)}_K$ in \eqref{eq:dual}. We define the piecewise constant function $f^{ij}:\mathbb{R}^d \rightarrow \mathbb{R}$ such that $x\mapsto \alpha^{ij}_k$ when $x\in \textrm{int}(V_k)$, where $ \textrm{int}(V_k)$ denotes the open interior of the Vorono\"i cell $V_k$. 
Similarly, $g^{ij}: y \mapsto \beta^{ij}_k$ when $y\in V_k$. Then, thanks to the absolute continuity of the $\mu^{(i)}$'s, one can write:
\begin{align}
    W_2^2(\mu^{(i)}_K,\mu^{(j)}_K) &= \sum\limits_{k=1}^K a^{(i)}_k \alpha^{ij}_k + \sum\limits_{k=1}^K a^{(j)}_k \beta^{ij}_k\nonumber\\
    &= \sum\limits_{k=1}^K \int_{V_k} \mathrm{d}\mu^{(i)}(x) \alpha^{ij}_k +\sum\limits_{k=1}^K \int_{V_k} \mathrm{d}\mu^{(j)}(y) \beta^{ij}_k\nonumber\\
    &= \sum\limits_{k=1}^K \int_{ \textrm{int}(V_k)}  \alpha^{ij}_k \mathrm{d}\mu^{(i)}(x) +\sum\limits_{k=1}^K \int_{ \textrm{int}(V_k)}\beta^{ij}_k \mathrm{d}\mu^{(j)}(y)\nonumber\\
    &= \int_{\mathbb{R}^d}f^{ij}(x)\mathrm{d}\mu^{(i)}(x) + \int_{\mathbb{R}^d} g^{ij}(y) \mathrm{d}\mu^{(j)}(y)\label{eq:kantorovich_dual}
\end{align}

We then aim at identifying \eqref{eq:kantorovich_dual} with a dual formulation for OT between $\mu^{(i)}$ and $\mu^{(j)}$ with respect to some cost function $c:\mathbb{R}^d\times\mathbb{R}^d\to \mathbb{R}$ to be defined later on, where
\begin{align}
\text{OT}_c(\mu^{(i)},\mu^{(j)}) &= \min\limits_{\pi \in\Pi(\mu^{(i)},\mu^{(j)})}\int c(x,y)\mathrm{d}\pi(x,y)\nonumber\\ &= \sup\limits_{f,g : f(x)+g(y)\leq c(x,y)} \int_{\mathbb{R}^d} f(x)\mathrm{d}\mu^{(i)}(x)+\int_{\mathbb{R}^d} g(y)\mathrm{d}\mu^{(j)}(y).\label{eq:OTc}
\end{align}
If $x\in V_k$ and $y\in V_{k'}$, we obtain
\begin{align}
    f^{ij}(x)+g^{ij}(y) & = \alpha^{ij}_k + \beta^{ij}_{k'} \nonumber \\
    &\leq \|x_k-x_{k'}\|^2  \nonumber  \\
    &\leq 3\|x_k-x\|^2 + 3\|x-y\|^2 + 3\|y-x_{k'}\|^2  \nonumber  \\
    &\leq  3 \mathrm{diam}(V_k)+3\mathrm{diam}(V_{k'}) +  3\|x-y\|^2  \nonumber  \\
    &\leq  6\max\limits_{k}\mathrm{diam}(V_k) +  3\|x-y\|^2, \label{eq:bound_cost}
\end{align}
where the first inequality is due to the fact that $\alpha^{ij}_k$ and $\beta^{ij}_{k'}$ are Kantorovich potentials of $W_2(\mu^{(i)}_K,\mu^{(j)}_K)$ in \eqref{eq:dual}. The second inequality comes from \eqref{eq:young} where $\lambda=1$. Defining the new cost function $c(x,y) = 6\max\limits_{k}\mathrm{diam}(V_k) + 3\|x-y\|^2$, we thus define
\begin{align*}
    \text{OT}_c(\mu^{(i)},\mu^{(j)})
    &= 6\max\limits_k \mathrm{diam}(V_k) +  3\min\limits_{\pi\in\Pi(\mu^{(i)},\mu^{(j)})}\int \|x-y\|^2\mathrm{d}\pi(x,y)\\
    &=  6\max\limits_k \mathrm{diam}(V_k) +  3
    W_2^2(\mu^{(i)},\mu^{(j)}).
\end{align*}
Now, by inequality \eqref{eq:bound_cost}, it follows that $f^{ij}$ and $g^{ij}$ are feasible Kantorovich potentials of $\text{OT}_c$ between $\mu^{(i)}$ and $\mu^{(j)}$ in \eqref{eq:OTc}. Hence, we finally obtain from \eqref{eq:kantorovich_dual} that
\begin{align*}
    W_2^2(\mu^{(i)}_K,\mu^{(j)}_K) &\leq \int_{\mathbb{R}^d}f^{ij}(x)\mathrm{d}\mu^{(i)}(x) + \int_{\mathbb{R}^d} g^{ij}(y) \mathrm{d}\mu^{(j)}(y)\\
    &\leq \sup\limits_{f,g: f(x)+g(y)\leq c(x,y)} \int_{\mathbb{R}^d} f(x)\mathrm{d}\mu^{(i)}(x)+\int_{\mathbb{R}^d} g(y)\mathrm{d}\mu^{(j)}(y)\\
    &= 6\max\limits_k \mathrm{diam}(V_k) + 3 W_2^2(\mu^{(i)},\mu^{(j)}),
\end{align*}
which concludes the proof.
\end{proof}

The following lemma provides an upper bound on the term $\max_k \mathrm{diam}(V_k)$ in Lemma \ref{lemma:pairwise_dist1} in the special case where the support of $\bar{\mu}$ is included in $[0,1]^d$. This bound depends on the number of centers $K$ and the ambient dimension $d$, and holds true for either discrete or continuous support.
 
\begin{lemma}\label{lemma:pairwise_dist2}
Suppose that the (discrete or continuous) support of the mean measure $\bar{\mu}$ is included in $[0,1]^d$ and let  $(V_k)_{k=1}^K$ be the Vorono\"i cells of the quantization of $\bar{\mu}$. Then,
$$\max\limits_{1\leq k \leq K} \mathrm{diam}(V_k) \leq \frac{d}{\lfloor \sqrt[d]{K} \rfloor^2}.$$
\end{lemma}

\begin{proof}[Proof of Lemma \ref{lemma:pairwise_dist2}]
Suppose that the support of the mean measure $\bar{\mu}$ is included in $[0,1]^d$. Then, we first have that $\max\limits_{1\leq k \leq K} \mathrm{diam}(V_k) \leq \max\limits_{1\leq j \leq \lfloor \sqrt[d]{K} \rfloor^d} \mathrm{diam}(V_j)$. Indeed, as $\lfloor \sqrt[d]{K} \rfloor^d\leq K$, this is simply reducing the number of quantization points, and therefore increasing the maximum diameter of the cells. Now,  denoting $K'=\lfloor \sqrt[d]{K} \rfloor^d$ the $d$-th power of the integer $\lfloor \sqrt[d]{K} \rfloor$, one can grid the support space $[0,1]^d$ with $K'$ points $\{x_1,\ldots,x_{K'}\}$ set as $\Bigl\{\left(\frac{a^{(i)}_1}{\lfloor \sqrt[d]{K} \rfloor}, \cdots, \frac{a^{(i)}_d}{\lfloor \sqrt[d]{K} \rfloor}\right) ~|~ a^{(i)}_k\in\{1,\cdots,d\} \Bigr\}$. With these centers, all Vorono\"i cells have the same diameter, which is:
$$\forall 1\leq k \leq K,\ \mathrm{diam}(V_k) = \Bigl\|\left(\frac{1}{\lfloor \sqrt[d]{K} \rfloor}, \cdots, \frac{1}{\lfloor \sqrt[d]{K} \rfloor}\right)\Bigr\|^2 = \sum\limits_{i=1}^d \Bigl(\frac{1}{\lfloor \sqrt[d]{K} \rfloor}\Bigr)^2 = \frac{d}{\lfloor \sqrt[d]{K} \rfloor^2}.$$
This finally gives us:
$$\max\limits_{1\leq k \leq K} \mathrm{diam}(V_k) \leq \frac{d}{\lfloor \sqrt[d]{K} \rfloor^2}.$$
\end{proof}

\subsection{Proof of Proposition \ref{prop:clustering_perf}}\label{sec:proof_clusering_perf}

\begin{proof}[Proof of Proposition \ref{prop:clustering_perf}]
For the result regarding the within-class variance of the clusters, we have, using the triangle inequality and \eqref{eq:young} with $\lambda=1$,
$$
W_2^2(\mu^{(i)}_K,\mu^{(j)}_K) \leq 3\bigl(W_2^2(\mu^{(i)}_K, \mu^{(i)})+W_2^2(\mu^{(i)},\mu^{(j)}) + W_2^2(\mu^{(j)},\mu^{(j)}_K)\bigr)
$$
Summing over the indices of $I_l$ and dividing by $N_l^2$ yields:
\begin{align*}
\mathrm{WCSS}(l,\mu_K)
&\leq \frac{3}{N_l^2}\sum\limits_{i,j\in I_l}W_2^2(\mu^{(i)}_K, \mu^{(i)})+\frac{3}{N_l^2}\sum\limits_{i,j\in I_l} W_2^2(\mu^{(i)},\mu^{(j)}) + \frac{3}{N_l^2}\sum\limits_{i,j\in I_l}W_2^2(\mu^{(j)},\mu^{(j)}_K)\\
&\leq \frac{6}{N_l} \sum\limits_{i\in I_l} W_2^2(\mu^{(i)}_K,\mu^{(i)}) + 3\mathrm{ WCSS}(l,\mu)\\
&\leq \frac{6}{N_l} \sum\limits_{1\leq i\leq N} W_2^2(\mu^{(i)}_K,\mu^{(i)}) + 3\mathrm{WCSS}(l,\mu) = \frac{6N}{N_l}\varepsilon_K + 3\mathrm{WCSS}(l,\mu),
\end{align*}
where the last equality follows from  Theorem \ref{th:conv},
which concludes the first item \eqref{intra_var_bound} of the proposition. For the second statement on the between-class variance, we rewrite the triangle inequality:
\begin{align*}
&\quad 3\bigl(W_2^2(\mu^{(i)}_K, \mu^{(i)})+W_2^2(\mu^{(i)}_K,\mu^{(j)}_K) + W_2^2(\mu^{(j)},\mu^{(j)}_K)\bigr) \geq  W_2^2(\mu^{(i)},\mu^{(j)}) \\
&\Leftrightarrow W_2^2(\mu^{(i)}_K,\mu^{(j)}_K) \geq \frac{1}{3}W_2^2(\mu^{(i)},\mu^{(j)}) - W_2^2(\mu^{(i)}_K, \mu^{(i)}) - W_2^2(\mu^{(j)},\mu^{(j)}_K)
\end{align*}
Summing over the indices of $I_{l_1}$ and $I_{l_2}$ and dividing by $N_{l_1}N_{l_2}$ gives:
\begin{align*}
    \mathrm{BCSS}(l_1, l_2, \mu_K) 
    &\geq \frac{1}{3}\frac{1}{N_{l_1} N_{l_2}}\sum\limits_{\substack{i_1\in I_{l_1} \\ i_2\in I_{l_2}}}W_2^2(\mu^{(i_1)},\mu^{(i_2)})\\
    &\qquad - \frac{1}{N_{l_1} N_{l_2}}\sum\limits_{\substack{i_1\in I_{l_1} \\ i_2\in I_{l_2}}}W_2^2(\mu^{(i_1)}_K, \mu^{(i_1)}) - \frac{1}{N_{l_1} N_{l_2}}\sum\limits_{\substack{i_1\in I_{l_1} \\ i_2\in I_{l_2}}}W_2^2(\mu^{(i_2)},\mu^{(i_2)}_K)\\
    &\geq \frac{1}{3}\mathrm{ BCSS}(l_1, l_2, \mu)
    - \frac{1}{N_{l_1}N_{l_2}} \sum\limits_{1\leq i_1,i_2\leq N} W_2^2(\mu^{(i_1)}_K,\mu^{(i_1)})\\ &\qquad - \frac{1}{N_{l_1}N_{l_2}} \sum\limits_{1\leq i_1,i_2\leq N} W_2^2(\mu^{(i_2)}_K,\mu^{(i_2)}) \\
    &= \frac{1}{3}\mathrm{ BCSS}(l_1, l_2, \mu) - \frac{1}{N_{l_1}} \sum\limits_{1\leq i_1\leq N} W_2^2(\mu^{(i_1)}_K,\mu^{(i_1)})\\ &\qquad - \frac{1}{N_{l_2}} \sum\limits_{1\leq i_2\leq N} W_2^2(\mu^{(i_2)}_K,\mu^{(i_2)}) \\
    &= \frac{1}{3}\mathrm{BCSS}(l_1, l_2, \mu) - \Bigl(\frac{1}{N_{l_1}}+\frac{1}{N_{l_2}}\Bigr) \sum\limits_{1\leq i\leq N} W_2^2(\mu^{(i)}_K,\mu^{(i)})\\
    &= \frac{1}{3}\mathrm{BCSS}(l_1, l_2, \mu) - \Bigl(\frac{N}{N_{l_1}}+\frac{N}{N_{l_2}}\Bigr) \varepsilon_K,
\end{align*}
where the last equality follows from  Theorem \ref{th:conv}, which concludes the proof.
\end{proof}

\subsection{Proof of Proposition \ref{prop:cov_op_lot}}\label{sec:proof_cov_op_lot}

We start by proving Lemma \ref{lemma:cov_op_gen_embedding} below, which holds for an arbitrary Hilbert space embedding that is $\alpha$-Holder continuous with regards to the 2-Wasserstein distance. Let $\mathcal{H}$ be a Hilbert space with corresponding norm $\|\cdot\|_{\mathcal{H}}$. Let $\Phi:\mathcal{P}(\mathcal{X})\rightarrow \mathcal{H}$ be a Hilbert space embedding of probability measures. We define the empirical covariance operators of the embedded original measures and the embedded quantized measures as

$$
\Sigma^N = \frac{1}{N}\sum\limits_{i=1}^N \Phi(\mu^{(i)})\otimes \Phi(\mu^{(i)}) \qquad \Sigma^N_K = \frac{1}{N}\sum\limits_{i=1}^N \Phi(\mu^{(i)}_K)\otimes \Phi(\mu^{(i)}_K).
$$

\begin{lemma}\label{lemma:cov_op_gen_embedding}
Let $\Phi:\mathcal{P}(\mathcal{X})\rightarrow \mathcal{H}$ be a Hilbert space embedding such that $\|\Phi(\mu)\|_{\mathcal{H}} \leq R$ for any $\mu \in \mathcal{P}(\mathcal{X})$. For $0<\alpha\leq 1$, assume $\Phi$ is $\alpha$-Holder continuous with regards to the 2-Wasserstein distance, that is, there exists a constant $C>0$ such that

$$
    \forall \mu,\nu\in\mathcal{P}(\mathcal{X}), \qquad \|\Phi(\mu) - \Phi(\nu)\|_{\mathcal{H}}\leq CW_2^{\alpha}(\mu,\nu).
$$

Then,

$$
\|\Sigma^N - \Sigma^N_K\|_{\mathrm{HS}} \leq 2RC ~ \varepsilon_K^{\alpha/2}(\overline{\mu}).
$$
\end{lemma}

\begin{proof}[Proof of Lemma \ref{lemma:cov_op_gen_embedding}]
We can write

\begin{align*}
\|\Sigma^N - \Sigma^N_K\|_{\mathrm{HS}} &= \frac{1}{N} \Bigl\|\sum\limits_{i=1}^N \Phi(\mu^{(i)})\otimes \Phi(\mu^{(i)}) - \Phi(\mu^{(i)}_K)\otimes \Phi(\mu^{(i)}_K) \Bigr\|_{\mathrm{HS}} \\
&\leq \frac{1}{N}\sum\limits_{i=1}^N \Bigl\|\Phi(\mu^{(i)})\otimes \Phi(\mu^{(i)}) - \Phi(\mu^{(i)}_K)\otimes \Phi(\mu^{(i)}_K)\Bigr\|_{\mathrm{HS}} \\
\end{align*}

One has that

\begin{align*}
\Bigl\|\Phi(\mu^{(i)})\otimes \Phi(\mu^{(i)}) - \Phi(\mu^{(i)}_K)\otimes \Phi(\mu^{(i)}_K) \Bigr\|_{\mathrm{HS}} &= \Bigl\|\bigl(\Phi(\mu^{(i)}) - \Phi(\mu^{(i)}_K)\bigr) \otimes \Phi(\mu^{(i)}) + \Phi(\mu^{(i)}_K) \otimes \Phi(\mu^{(i)}) - \Phi(\mu^{(i)}_K)\otimes \Phi(\mu^{(i)}_K) \Bigr\|_{\mathrm{HS}} \\
&= \Bigl\|\bigl(\Phi(\mu^{(i)}) - \Phi(\mu^{(i)}_K)\bigr) \otimes \Phi(\mu^{(i)}) + \Phi(\mu^{(i)}_K) \otimes \bigl(\Phi(\mu^{(i)}) - \Phi(\mu^{(i)}_K) \bigr) \Bigr\|_{\mathrm{HS}} \\
&\leq \Bigl\|\bigl(\Phi(\mu^{(i)}) - \Phi(\mu^{(i)}_K)\bigr) \otimes \Phi(\mu^{(i)})\Bigr\|_{\mathrm{HS}} + \Bigl\|\Phi(\mu^{(i)}_K) \otimes \bigl(\Phi(\mu^{(i)}) - \Phi(\mu^{(i)}_K) \bigr) \Bigr\|_{\mathrm{HS}} \\
\end{align*}

Using that $\|u \otimes v\|_{\mathrm{HS}} = \|u\|_{\mathcal{H}} \|v\|_{\mathcal{H}}$ for any $u,v \in \mathcal{H}$, we obtain

\begin{align*}
    \Bigl\|\Phi(\mu^{(i)})\otimes \Phi(\mu^{(i)}) - \Phi(\mu^{(i)}_K)\otimes \Phi(\mu^{(i)}_K) \Bigr\|_{\mathrm{HS}} &\leq \|\Phi(\mu^{(i)}) - \Phi(\mu^{(i)}_K)\|_{\mathcal{H}} \|\Phi(\mu^{(i)})\|_{\mathcal{H}} + \|\Phi(\mu^{(i)}_K)\|_{\mathcal{H}} \|\Phi(\mu^{(i)}) - \Phi(\mu^{(i)}_K)\|_{\mathcal{H}} \\
    &= 2R \|\Phi(\mu^{(i)}) - \Phi(\mu^{(i)}_K)\|_{\mathcal{H}},
\end{align*}

where $R$ is an upperbound on $\|\Phi(\mu)\|_{\mathcal{H}}$ for any $\mu \in \mathcal{P}(\mathcal{X})$. Therefore, we have that 

\begin{align*}
    \|\Sigma^N - \Sigma^N_K\|_{\mathrm{HS}} &\leq \frac{2R}{N} \sum\limits_{i=1}^N \|\Phi(\mu^{(i)}) - \Phi(\mu^{(i)}_K)\|_{\mathcal{H}}\\
\end{align*}

Using the $\alpha$-Holder continuity of $\Phi$ gives 

\begin{align*}
    \|\Sigma^N -\Sigma^N_K\|_{\mathrm{HS}} &\leq \frac{2R}{N} \sum\limits_{i=1}^N C W_2^{\alpha}(\mu^{(i)}, \mu^{(i)}_K)\\
    &= \frac{2RC}{N} \sum\limits_{i=1}^N \Bigl(W_2^2(\mu^{(i)}, \mu^{(i)}_K)\Bigr)^{\alpha/2}\\
\end{align*}

As $\alpha/2 \leq 1$, we can use Jensen's inequality to obtain

\begin{align*}
    \|\Sigma^N -\Sigma^N_K\|_{\mathrm{HS}} &\leq 2RC \Bigl(\frac{1}{N}\sum\limits_{i=1}^N W_2^2(\mu^{(i)}, \mu^{(i)}_K) \Bigr)^{\alpha/2}\\
    &= 2RC \varepsilon_K^{\alpha/2}(\overline{\mu}),
\end{align*}

which concludes the proof.
\end{proof}

The LOT embedding is $\Phi^{\mathrm{LOT}}:\mathcal{P}(\mathcal{X})\mapsto T_{\rho}^{\mu} - \mathrm{Id}\in L^2(\rho)$ for an a.c. reference measure $\rho$. Note that for any measure $\mu\in\mathcal{P}(\mathcal{X})$, $\|\Phi^{\mathrm{LOT}}(\mu)\|^2_{L^2(\rho)} = \int |T_{\rho}^{\mu}(x)-x|^2\mathrm{d}\rho(x) \leq \int \mathrm{diam}(\mathcal{X})^2\mathrm{d}\rho(x) = \mathrm{diam}(\mathcal{X})^2$. The LOT embedding is therefore bounded with $R=\mathrm{diam}(\mathcal{X})$. We now use a result due to Ambrosio and reported in \cite{gigli2011holder} and \cite{delalande2023quantitative}[Theorem (Ambrosio)], which states that when $\rho$ is a probability density over a compact set $\mathcal{X}$, $\mu$ and $\nu$ are probability measures on a $\mathcal{X}$ and $T_{\rho}^{\mu}$ is $L$-Lipschitz (by hypothesis), then:

\begin{equation}\label{ambrosio}
    \|T_{\rho}^{\mu}-T_{\rho}^{\nu}\|_{L^2(\rho)} \leq 2\sqrt{\mathrm{diam}( \mathcal{X})L}W_1(\mu,\nu)^{1/2}.
\end{equation}

Using that $W_1(\mu,\nu) \leq W_2(\mu,\nu)$ for any $\mu,\nu \in \mathcal{P}(\mathcal{X})$, we deduce from \eqref{ambrosio} that the LOT embedding is $\alpha$-Holder continuous with $\alpha=1/2$ and $C = 2\sqrt{\mathrm{diam}( \mathcal{X})L}$. Therefore, we can apply Lemma \ref{lemma:cov_op_gen_embedding} to obtain

$$
\|\Sigma^N - \Sigma^N_K\|_{\mathrm{HS}} \leq 4\mathrm{diam}( \mathcal{X})^{3/2} L^{1/2} ~ \varepsilon_K^{1/4}(\overline{\mu}).
$$

\subsection{Extension of Proposition \ref{prop:cov_op_lot} to the kernel mean embedding}\label{sec:cov_op_kme}

Given a positive definite kernel function $k:\mathcal{X}\times\mathcal{X}\rightarrow \mathbb{R}$ and associated RKHS $\mathcal{H}$, the kernel mean embedding (KME) is defined as:
\begin{align*}
    \Phi^{\mathrm{KME}}:\mathcal{P}(\mathcal{X}) &\rightarrow \mathcal{H}\\
    \mu &\rightarrow \int_{\mathcal{X}} k(x,\cdot)\mathrm{d}\mu(x).
\end{align*}
We derive a similar result to Proposition \ref{prop:cov_op_lot} for the KME.

\begin{proposition}\label{prop:cov_op_kme}
    Let $\Sigma^N$ and $\Sigma^N_K$ be the covariance operators associated with the KME of the orginal measures and the quantized measures respectively. Assume that there exists a constant $c>0$ such that
    $$
    \forall x,y\in\mathcal{X}, \qquad k(x,x) +k(y,y) - 2k(x,y)\leq c^2\|x-y\|^2.
    $$

    Assume also that the kernel $k$ is bounded by a constant $M_k<\infty$. Then,

    $$
    \|\Sigma^N - \Sigma^N_K\|_{\mathrm{HS}}\leq 2Rc \varepsilon_K^{1/2}(\overline{\mu}),
    $$

\end{proposition}

\begin{proof}[Proof of Proposition \ref{prop:cov_op_kme}]
    For any $\mu\in\mathcal{P}(\mathcal{X})$, we have that:

    $$
    \|\Phi^{\mathrm{KME}}(\mu)\|_{\mathcal{H}}^2 = \langle \int_{\mathcal{X}} k(x,\cdot)\mathrm{d}\mu(x), \int_{\mathcal{X}} k(y,\cdot)\mathrm{d}\mu(y)\rangle_{\mathcal{H}} = \int_{\mathcal{X}\times\mathcal{X}} k(x,y)\mathrm{d}\mu(x)\mathrm{d}\mu(y) \leq M_k.
    $$

    The KME is therefore bounded. Furthermore,  we know that $(\mathcal{X},\|\cdot\|)$ is a complete separable metric space, $k$ is positive definite and verifies $\forall x,y\in \mathcal{X}, k(x,x)+k(y,y)-2k(x,y) \leq c^2\|x-y\|^2$ for a constant $c>0$, and $\mu^{(i)}$ and $\mu^{(i)}_K$ have finite 2-moments as they are supported on a compact set $\mathcal{X}\subset\mathbb{R}^d$. We can then use Proposition 2 of \cite{vayer2023controlling}, which proves that under these assumptions,

    $$\|\Phi^{\mathrm{KME}}(\mu^{(i)}) - \Phi^{\mathrm{KME}}(\mu^{(i)}_K) \|_{\mathcal{H}} \leq cW_2(\mu^{(i)},\mu_K^{(i)}).$$

    The embedding $\Phi^{\mathrm{KME}}$ is therefore 1-Holder continuous with respect to the 2-Wasserstein distance. Applying Lemma \ref{lemma:cov_op_gen_embedding} with $\alpha=1$ and $C = c$ yields,

    $$
    \|\Sigma^N - \Sigma^N_K\|_{\mathrm{HS}}\leq 2Rc \varepsilon_K^{1/2}(\overline{\mu}),
    $$

    which concludes the proof.
\end{proof}

\section{Convergence of the Gram matrix}\label{sec:gram}

We denote $G^N$ and $G^N_K$ the Gram matrices associated to the LOT embeddings of the original measures and the quantized measures respectively, that is

$$
\forall 1\leq i,j\leq N, \qquad G^N_{ij} = \langle \Phi^{\mathrm{LOT}}(\mu^{(i)}), \Phi^{\mathrm{LOT}}(\mu^{(j)}) \rangle_{L^2(\rho)}, \quad (G^N_K)_{ij} = \langle \Phi^{\mathrm{LOT}}(\mu^{(i)}_K), \Phi^{\mathrm{LOT}}(\mu^{(j)}_K) \rangle_{L^2(\rho)}.
$$

We denote $\|\cdot\|_F$ the Frobenius norm on $\mathbb{R}^{N\times N}$.

\begin{proposition}
    [Convergence of the Gram matrix]\label{prop:gram_matrix_lot}
    Let $G^N$ and $G^N_K$ be the Gram matrices associated to the LOT embeddings of the original measures and the quantized measures respectively. Assume that the optimal transport maps $T_{\rho}^{\mu^{(i)}}$ are $L$-Lipschitz for any $1\leq i \leq N$. Then,

    $$
    \|G^N - G^N_K\|_F \leq  4N\,\mathrm{diam}( \mathcal{X})^{3/2} L^{1/2} ~ \varepsilon_K^{1/4}(\overline{\mu}).
    $$
\end{proposition}

As a consequence of Proposition \ref{prop:gram_matrix_lot}, we have that any ML tasks relying on the Gram matrix $G^N_K$ is consistent (as $K\rightarrow \infty$) with the one relying on $G^N$.

\begin{proof}[Proof of Proposition \ref{prop:gram_matrix_lot}]
    We start by proving Lemma \ref{lemma:gram_matrix} below, which holds for an arbitrary Hilbert space embedding that is $\alpha$-Holder continuous with regards to the 2-Wasserstein distance. 
    Let $\mathcal{H}$ be a Hilbert space with corresponding inner-product $\langle \cdot,\cdot\rangle_{\mathcal{H}}$ and norm $\|\cdot\|_{\mathcal{H}}$. Let $\Phi:\mathcal{P}(\mathcal{X})\rightarrow \mathcal{H}$ be a Hilbert space embedding of probability measures. We define the Gram matrices of the embedded original measures and embedded quantized measures as

    $$
    \forall 1\leq i,j\leq N, \qquad G^N_{ij} = \langle \Phi(\mu^{(i)}), \Phi(\mu^{(j)}) \rangle_{\mathcal{H}}, \quad (G^N_K)_{ij} = \langle \Phi(\mu^{(i)}_K), \Phi(\mu^{(j)}_K) \rangle_{\mathcal{H}}.
    $$

\begin{lemma}[Convergence of the Gram matrix]\label{lemma:gram_matrix}
    Let $\Phi:\mathcal{P}(\mathcal{X})\to \mathcal{H}$ be a Hilbert space embedding such that $\|\Phi(\mu)\|\leq R$ for any $\mu \in\mathcal{P}(\mathcal{X})$. Assume $\Phi$ is $\alpha$-Holder continuous with respect to the 2-Wasserstein distance for $0<\alpha \leq 1$. Then,

    $$
    \|G^N - G^N_K\|_F \leq 2RCN \varepsilon_K^{\alpha/2}(\overline{\mu}).
    $$
\end{lemma}

\begin{proof}[Proof of Lemma \ref{lemma:gram_matrix}]

We can write

\begin{align*}
\|G^N - G^N_K\|^2_{F} &= \sum\limits_{i,j=1}^N \Bigl(\bigl\langle \Phi(\mu^{(i)}), \Phi(\mu^{(j)}) \bigr\rangle_{\mathcal{H}} -\bigl\langle \Phi(\mu^{(i)}_K), \Phi(\mu^{(j)}_K) \bigr\rangle_{\mathcal{H}}\Bigr)^2 \\
\end{align*}

One has that

\begin{align*}
    \Bigl(\bigl\langle \Phi(\mu^{(i)}), \Phi(\mu^{(j)}) \bigr\rangle_{\mathcal{H}} -\bigl\langle \Phi(\mu^{(i)}_K), \Phi(\mu^{(j)}_K) \bigr\rangle_{\mathcal{H}}\Bigr)^2 &= \Bigl(\bigl\langle \Phi(\mu^{(i)}) - \Phi(\mu^{(i)}_K), \Phi(\mu^{(j)}) \bigr\rangle_{\mathcal{H}} +  \bigl\langle \Phi(\mu^{(i)}_K), \Phi(\mu^{(j)}) \bigr\rangle_{\mathcal{H}} - \bigl\langle \Phi(\mu^{(i)}_K), \Phi(\mu^{(j)}_K) \bigr\rangle_{\mathcal{H}}\Bigr)^2\\
    &= \Bigl(\bigl\langle \Phi(\mu^{(i)}) - \Phi(\mu^{(i)}_K), \Phi(\mu^{(j)}) \bigr\rangle_{\mathcal{H}} +  \bigl\langle \Phi(\mu^{(i)}_K), \Phi(\mu^{(j)})  - \Phi(\mu^{(j)}_K) \bigr\rangle_{\mathcal{H}}\Bigr)^2\\
    &\leq 2\bigl\langle \Phi(\mu^{(i)}) - \Phi(\mu^{(i)}_K), \Phi(\mu^{(j)}) \bigr\rangle_{\mathcal{H}}^2 +  2\bigl\langle \Phi(\mu^{(i)}_K), \Phi(\mu^{(j)})  - \Phi(\mu^{(j)}_K) \bigr\rangle_{\mathcal{H}}^2\\
    &= 2\|\Phi(\mu^{(i)}) - \Phi(\mu^{(i)}_K)\|^2_{\mathcal{H}} \|\Phi(\mu^{(j)})\|^2_{\mathcal{H}} + 2\|\Phi(\mu^{(i)}_K)\|^2_{\mathcal{H}} \|\Phi(\mu^{(j)})  - \Phi(\mu^{(j)}_K)\|^2_{\mathcal{H}}\\
    &\leq 2R^2 \|\Phi(\mu^{(i)}) - \Phi(\mu^{(i)}_K)\|^2_{\mathcal{H}} + 2R^2 \|\Phi(\mu^{(j)})  - \Phi(\mu^{(j)}_K)\|^2_{\mathcal{H}}.
\end{align*}

Therefore, we have that

\begin{align*}
    \|G^N - G^N_K\|^2_{F} &\leq \sum\limits_{i,j=1}^N 2R^2 \|\Phi(\mu^{(i)}) - \Phi(\mu^{(i)}_K)\|^2_{\mathcal{H}}  + \sum\limits_{i,j=1}^N 2R^2 \|\Phi(\mu^{(j)})  - \Phi(\mu^{(j)}_K)\|^2_{\mathcal{H}} \\
    &= 2R^2 N \sum\limits_{i=1}^N \|\Phi(\mu^{(i)}) - \Phi(\mu^{(i)}_K)\|^2_{\mathcal{H}} + 2R^2 N \sum\limits_{j=1}^N \|\Phi(\mu^{(j)})  - \Phi(\mu^{(j)}_K)\|^2_{\mathcal{H}} \\
    &= 4R^2 N \sum\limits_{i=1}^N \|\Phi(\mu^{(i)}) - \Phi(\mu^{(i)}_K)\|^2_{\mathcal{H}}
\end{align*}

Now, if the distance between the embeddings is $\alpha$-Holder continuous with respect to the 2-Wasserstein distance, then there exists a constant $C>0$ such that for any $\mu,\nu \in \mathcal{P}(\mathcal{X})$,

$$
\|\Phi(\mu) - \Phi(\nu)\|_{\mathcal{H}} \leq CW_2^{\alpha}(\mu,\nu).
$$

Then, we have that

\begin{align*}
    \|G^N - G^N_K\|^2_{F} &\leq 4R^2 N \sum\limits_{i=1}^N C^2 W_2^{2\alpha}(\mu^{(i)}, \mu^{(i)}_K)\\
    &= 4R^2 C^2 N \sum\limits_{i=1}^N \Bigl(W_2^2(\mu^{(i)}, \mu^{(i)}_K)\Bigr)^{\alpha}\\
\end{align*}

As $\alpha \leq 1$, we can use Jensen's inequality to obtain

\begin{align*}
    \|G^N - G^N_K\|^2_{F} &\leq 4R^2C^2 N\cdot N \Bigl(\frac{1}{N}\sum\limits_{i=1}^N W_2^2(\mu^{(i)}, \mu^{(i)}_K) \Bigr)^{\alpha}\\
    &= 4R^2C^2 N^2 \varepsilon_K^{\alpha}(\overline{\mu}).
\end{align*}

Taking the square root yields
$$
\|G^N - G^N_K\|_{F} \leq 2RCN \varepsilon_K^{\alpha/2}(\overline{\mu}),
$$
which concludes the proof.

\end{proof}

For the LOT embedding, we can use the same constants as in the previous section: it is bounded with $R=\mathrm{diam}(\mathcal{X})$ and it is $\alpha$-Holder continuous with $\alpha=1/2$ and $C = 2\sqrt{\mathrm{diam}( \mathcal{X})L}$. Therefore, applying Lemma \ref{lemma:gram_matrix} to the LOT embedding yields

$$
\|G^N - G^N_K\|_{F} \leq 4N\,\mathrm{diam}( \mathcal{X})^{3/2} L^{1/2} ~ \varepsilon_K^{1/4}(\overline{\mu}).
$$

\end{proof}

\end{document}